\def\eqref#1{equation~\ref{#1}}
\def\1{\bm{1}}
\DeclareMathAlphabet{\mathsfit}{\encodingdefault}{\sfdefault}{m}{sl}
\SetMathAlphabet{\mathsfit}{bold}{\encodingdefault}{\sfdefault}{bx}{n}
\newcommand{\E}{\mathbb{E}}
\DeclareMathOperator{\End}{End}
\newcommand{\C}{\mathbb{C}}
  \renewcommand{\textcolor}[2]{#2} 
\definecolor{mygreen}{rgb}{0,0.5,0}
\definecolor{myblue}{rgb}{0,0,1}
\definecolor{myorange}{rgb}{1,0.5,0}
\definecolor{myred}{rgb}{0.8,0,0}
\definecolor{codeframe}{RGB}{236,146,72}   
\definecolor{codebg}{RGB}{255,255,255}     
\definecolor{codegutter}{RGB}{253,236,220} 
\definecolor{codenums}{RGB}{110,110,110}   
\definecolor{codekw}{RGB}{54,152,65}       
\definecolor{codebool}{RGB}{155,78,170}    
\definecolor{codecomment}{RGB}{74,156,158} 
\definecolor{codestr}{RGB}{180,90,45}      
\definecolor{codegreen}{rgb}{0,0.5,0}
\definecolor{codered}{rgb}{0.8,0.2,0.2}
\definecolor{codeblue}{rgb}{0.2,0.2,0.7}
\lstdefinestyle{mypython}{
  language=Python,
  basicstyle=\ttfamily\small,
  keywordstyle=\bfseries\color{codegreen},
  commentstyle=\itshape\color{codered},
  stringstyle=\color{codeblue},
  numbers=left,
  numberstyle=\scriptsize,
  numbersep=8pt,
  showstringspaces=false,
  keepspaces=true,
  columns=fullflexible,
  tabsize=4,
  breaklines=true,
  frame=none,
  xleftmargin=1em
}
\lstdefinestyle{py-screenshot}{
  language=Python,
  basicstyle=\ttfamily\footnotesize,
  numbers=left,
  numberstyle=\ttfamily\scriptsize\color{codenums},
  numbersep=8pt,
  xleftmargin=2.2em,
  breaklines=true,
  postbreak=\mbox{\textcolor{codenums}{$\hookrightarrow$}\space},
  showstringspaces=false,
  tabsize=4,
  keepspaces=true,
  columns=fullflexible,
  commentstyle=\itshape\color{codecomment},
  stringstyle=\color{codestr},
  keywordstyle=\color{codekw}\bfseries,
  keywordstyle=[2]\color{codebool}\bfseries,
  morekeywords=[2]{and,or,not},
}
\theoremstyle{plain}
\newtheorem{theorem}{Theorem}[section]
\newtheorem{lemma}[theorem]{Lemma}
\newtheorem{corollary}[theorem]{Corollary}
\theoremstyle{definition}
\newtheorem{definition}[theorem]{Definition}
\theoremstyle{remark}
\icmltitlerunning{Approximate Equivariance via Projection-Based Regularisation}
\begin{document}

\twocolumn[
  \icmltitle{Approximate Equivariance via Projection-Based Regularisation}



  \icmlsetsymbol{equal}{*}

  \begin{icmlauthorlist}
    \icmlauthor{Torben Berndt}{hits}
    \icmlauthor{Jan Stühmer}{hits,kit}
  \end{icmlauthorlist}

  \icmlaffiliation{hits}{Heidelberg Institute for Theoretical Studies, Heidelberg, Germany}
  \icmlaffiliation{kit}{IAR, Karlsruhe Institute of Technology, Karlsruhe, Germany}


  \icmlkeywords{Machine Learning, ICML}

  \vskip 0.3in
]



\printAffiliationsAndNotice{}  

\begin{abstract}

   Equivariance is a powerful inductive bias in neural networks, improving generalisation and physical consistency. Recently, however, non-equivariant models have regained attention, due to their better runtime performance and imperfect symmetries that might arise in real-world applications. This has motivated the development of approximately equivariant models that strike a middle ground between respecting symmetries and fitting the data distribution. Existing approaches in this field either rely on sampling from a group, incurring a high sample complexity, or explicitly parameterise a model as a sum of an equivariant and non-equivariant network.
   This work instead approaches approximate equivariance via a projection-based regulariser which leverages a layer-wise orthogonal decomposition of a network's layers into equivariant and non-equivariant components. In contrast to existing methods, this penalises non-equivariance at an operator level across the full group orbit, rather than point-wise as in sample-based approaches. We present a mathematical framework for computing the non-equivariance penalty exactly and efficiently in both the spatial and spectral domains. \textcolor{blue}{In our experiments, our method is competitive with prior approximate-equivariance approaches in task performance, while achieving substantial runtime gains over sample-based regularisers.}
   \footnote{The source code is available at \url{https://github.com/hits-mli/approximate-equivariance-projection}.}
    
    




    
\end{abstract}   

\section{Introduction}

Over the past few years, equivariance has been proven to be a powerful design principle for machine learning models across chemistry~\citep{thomas2018tensor,satorras2021gnn,brandstetter2022geometric,hoogeboom22a,xu2024equivariant},
physics~\citep{bogatskiy2020,spinner2024,brehmer2025lorentzequivarianttransformerlhc}, robotics~\citep{hoang2025geometryaware}, and engineering~\citep{toshev2023}. 
Recently, however, there has been a shift back towards non-equivariant models, most prominently AlphaFold-3~\citep{abramson2024accurate}. Non-equivariant architectures often allow more flexible feature parameterisations and can be easier to optimise as the search is not restricted to an equivariant hypothesis class. This broader parameter space may enable the optimiser to find better minima than if it was confined to strictly equivariant models \citep{Pertigkiozoglou2025}. Moreover, many existing equivariant architectures rely on specialised tensor products to preserve symmetry~\citep{weiler2019general,brandstetter2022geometric}, which can be less efficient to compute on modern GPUs than dense matrix–vector operations.

At the same time, recent work demonstrates that equivariance remains a valuable inductive bias even at scale~\citep{brehmer2025does}, and, for example, state-of-the-art molecular property prediction models continue to leverage it~\citep{liao2023equiformer,equiformer_v2,fu2025learning}. This motivates approaches that retain the benefits of equivariance without incurring its full constraints or computational costs.
A common approach is to promote equivariance in otherwise non-equivariant architectures at the level of samples - for example via data augmentation, as in AlphaFold-3~\citep{abramson2024accurate}, or pointwise equivariance penalties~\citep{Bai_2025_CVPR}. In this work, we take a different perspective and introduce \emph{projection-based equivariance regularisation}, a framework based on first-principles which allows tuning equivariance into any neural architecture on the operator level, thereby directly affecting the model weights. Our primary contributions are:
%
%
    (i)~We propose a theoretically-grounded approach to regularise general machine learning models towards exact equivariance.
    (ii)~Making use of the orthogonal decompostion of functions into equivariant and non-equivariant components, we are able to penalise non-equivariance on an operator level over the whole group orbit.
    (iii)~We show how to efficiently calculate the closed-form projection by working in the Fourier domain, allowing efficient regularisation for continuous groups such as $SO(n)$.
    (iv)~\textcolor{blue}{We empirically show that our method is competitive with existing approximate-equivariance approaches in task performance, while offering especially large runtime gains over sample-based regularisers.}

\subsection{Related Work}

A growing body of work relaxes strictly equivariant architectures to better capture approximate or imperfect symmetries in data. \citet{finzi2021residual} model departures from symmetry by adding a small non-equivariant ``residual'' pathway to an otherwise equivariant network. This however, leads to a double parameterisation of the equivariant part, as the residual pathway also models equivariant functions. 
\textcolor{blue}{
\citet{Kim2023} extend this to mixed symmetries.
}
\citet{romero2022learning} introduce partial group convolutions that activate only on a subset of group elements. For discrete groups, \citet{Wang2022} propose relaxed group convolutions, later extended by \citet{Wang2024} to expose symmetry-breaking mechanisms; \citet{hofgard2024relaxedequivariantgraphneural} further generalise this framework to continuous groups. 
\citet{veefkind24} introduce a learnable non-uniform measure over the group within steerable CNNs, yielding partially equivariant SCNNs whose degree of symmetry breaking is explicitly encoded in the learned measure. This method however only addresses the special case of steerable CNNs, while we propose an architecture-agnostic regulariser. \citet{samudre25a} enforce approximate equivariance through group-matrix–structured convolutional layers with low displacement rank, so that symmetry and its controlled violation are encoded as proximity to the group-matrix manifold, leading to highly parameter-efficient CNNs for discrete groups.
\citet{mcneela2024equivarianceliealgebraconvolutions} introduce Lie-algebra convolutions with a non-strict equivariance bias, and \citet{van2022relaxing} relax translation equivariance using spatially non-stationary convolution kernels. On graphs, \citet{Huang2023} develop approximately automorphism-equivariant GNNs. A complementary line of work studies how to measure equivariance (or its violation) and use it in training objectives.
\textcolor{blue}{A number of works penalise pointwise deviations from equivariance constraints using randomly sampled group transformations: \citet{Bai_2025_CVPR} use this strategy for artifact reduction in imaging, \citet{kouzelis2025eqvae} for VAEs in generative modelling, and \citet{zhong2023improving} for depth and normal prediction in images.}
Finally, assuming a model splits into equivariant and non-equivariant parts, \citet{Pertigkiozoglou2025} propose to phase out the non-equivariant component during training. \citet{manolache2025learning} extend this idea by controlling the trade-off via constrained optimisation.

\section{Background}

\paragraph{Notation.}
For inner-product vector spaces $V$ and $V^\prime$, we denote the \textit{identity} on $V$ by $I_V$ and write $\mathrm{Hom}(V, V^\prime)$ for the vector space of linear maps $V\to V^\prime$, with $\mathrm{End}(V):=\mathrm{Hom}(V,V)$. We write $T^*:V^\prime\to V$ for the adjoint and $U(V)=\{T\in\mathrm{End}(V):TT^*=T^*T=I_V\}$ for the unitary group. For a measure $\mu$ on $V$ and an inner product $\langle\cdot,\cdot\rangle_{V^\prime}$ on $V^\prime$, we use the $L^2(\mu)$ inner product on functions $S,T:V\to V^\prime$ given by $\langle S,T\rangle_\mu:=\int_V\langle S(v),T(v)\rangle_{V^\prime}\,d\mu(v)$ with norm $\|T\|_\mu^2=\langle T,T\rangle_\mu$. For $T\in\mathrm{Hom}(V,V')$, if $\mu$ is an isotropic Gaussian on $V$, then $\|T\|_{\mu}^2=\int_V\|T(v)\|_{V'}^2\,d\mu(v)=\mathrm{tr}(T^*T)\eqqcolon\|T\|_{\mathrm{HS}}^2$, which in finite dimensions is equal to the Frobenius norm $\|T\|_F=\sqrt{\sum_{i,j}|T_{ij}|^2}$. 

\paragraph{Unitary representations.}
Given a group $G$, a \emph{unitary representation} is a homomorphism $\pi:G\!\to\!U(V_\pi)$ into the unitary operators on a Hilbert space $V_\pi$; we call the pair $(V_\pi,\pi)$ a $G$-module. Two representations $\pi:G\!\to\!U(V_\pi)$ and $\pi':G\!\to\!U(V_{\pi'})$ are said to be \emph{isomorphic} if there exists a unitary $U:V_\pi\!\to\!V_{\pi'}$ with
\(
\pi(g)=U\,\pi'(g)\,U^{-1}
\)
for all $g\in G$. A representation is \textit{irreducible} if it is not isomorphic to a direct sum of non-zero representations $\pi \oplus \pi^\prime$ where $\pi \oplus \pi^\prime:G \rightarrow U(V \oplus V^\prime)$ is defined by $(\pi \oplus \pi^\prime)(g)(v, v^\prime) = (\pi(g)v, \pi^\prime(g)v^\prime)$.

    
    
    
    


\textbf{Haar measure.} Let $G$ be a compact group. The \textit{Haar measure} $\lambda$ is the unique \textit{bi-invariant} and \textit{normalised} measure, i.e. for all Borel sets $E \subset G$ and every $g \in G$ we have $\lambda(gE) = \lambda(Eg) = \lambda(E)$, and $\lambda(G) = 1$. We can view the Haar measure as a uniform distribution over the group $G$. Indeed, if $G$ is discrete, the Haar measure becomes the discrete uniform measure with $\lambda(\{g\}) = \frac{1}{|G|}$ for all $g \in G$. 


\paragraph{Equivariance and $G$-smoothing.}
Let $T:(V,\pi)\!\to\!(V',\pi')$ be a (bounded) linear map between $G$-modules. We say $T$ is \emph{$G$-equivariant} if
\(
T\big(\pi(g)v\big)=\pi'(g)\,T(v)
\)
for all $g\in G$, $v\in V$. If the action on $V'$ is trivial ($\pi'(g)=I_{V'}$), we call $T$ \emph{invariant}.
Averaging over $G$ yields the \emph{$G$-smoothing (Reynolds) operator}
\begin{align}\label{eqn:projection_operator}
  P(T)\;=\;\int_G \pi'(g)^{*}\, T \,\pi(g)\, d\lambda(g).
\end{align}

\paragraph{Projection onto the equivariant subspace.}
When $\pi,\pi'$ are unitary, $P$ is the orthogonal projector (with respect to the $L^2(\mu)$ inner product) from $\mathrm{Hom}(V,V')$ onto the closed subspace of $G$-equivariant linear maps \citep{elesedy2021provably}. The following structural decomposition will be useful.

\begin{lemma}[\citet{elesedy2021provably}, Lemma 1]\label{lem:reynolds-decomp}
Let $\mathcal{H}\subset \{(V,\pi)\!\to\!(V',\pi')\}$ be a function space that is closed under $P$ (i.e.\ $P(T)\in\mathcal{H}$ whenever $T\in\mathcal{H}$). Define
\begin{align}
    S \;=\; \{T\in\mathcal{H}:\ T\ \text{is $G$-equivariant}\}, \\
    A \;=\; \ker P \;=\; \{T\in\mathcal{H}:\ P(T)=0\}.
\end{align}
Then $P$ is an orthogonal projection with range $S$ and kernel $A$, and hence
\(
\mathcal{H}=S\oplus A.
\)
\end{lemma}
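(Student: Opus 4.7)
The plan is to establish that $P$ is an orthogonal projection on $\mathcal{H}$ by verifying three properties: (i) $P$ maps $\mathcal{H}$ into itself and into the equivariant subspace $S$; (ii) $P$ acts as the identity on $S$, so $P^2 = P$ with range exactly $S$; (iii) $P$ is self-adjoint with respect to the Hilbert--Schmidt inner product on $\mathrm{Hom}(V,V')$. Once these are in place, the direct-sum decomposition $\mathcal{H} = S \oplus A$ is a standard consequence of being an orthogonal projection.

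First I would verify that $P(T)$ is $G$-equivariant for any $T \in \mathcal{H}$. For any $h \in G$, I compute $\pi'(h) P(T) \pi(h)^{*}$ and apply the left-invariance of the Haar measure, substituting $g \mapsto h^{-1} g$ inside the integral, to conclude $\pi'(h) P(T) = P(T) \pi(h)$. Combined with the closure hypothesis $P(\mathcal{H}) \subseteq \mathcal{H}$, this shows the range of $P$ lies in $S$. Next, if $T \in S$, then $\pi'(g)^{*} T \pi(g) = T$ for every $g$, so $P(T) = T$ by normalisation $\lambda(G) = 1$. Hence $P|_S = I_S$, which, together with range $\subseteq S$, yields idempotence $P^2 = P$ and $\mathrm{range}(P) = S$.

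The main obstacle, and the only step requiring genuine computation, is self-adjointness: $\langle P(T), S \rangle_{\mathrm{HS}} = \langle T, P(S) \rangle_{\mathrm{HS}}$ for $T, S \in \mathrm{Hom}(V,V')$. Writing out the Hilbert--Schmidt pairing as a trace, unfolding the definition of $P$, and moving the integral out of the trace by Fubini, the task reduces to showing
\begin{equation}
\int_G \pi'(g)\, S\, \pi(g)^{*}\, d\lambda(g) \;=\; \int_G \pi'(g)^{*}\, S\, \pi(g)\, d\lambda(g).
\end{equation}
This follows from the change of variables $g \mapsto g^{-1}$, which is measure-preserving because $\lambda$ is bi-invariant (equivalently, inverse-invariant on a compact group), together with unitarity $\pi(g^{-1}) = \pi(g)^{*}$ and $\pi'(g^{-1}) = \pi'(g)^{*}$. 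The delicate point is that both unitarity of the representations and the symmetry of $\lambda$ under inversion are essential; dropping either would break the identity.

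Finally, to obtain the decomposition, every $T \in \mathcal{H}$ splits as $T = P(T) + (T - P(T))$. The first summand lies in $S$ by step (i), and the second lies in $A$ since $P(T - P(T)) = P(T) - P^2(T) = 0$ by idempotence. Orthogonality of the two summands is immediate from self-adjointness and idempotence: for $T_S \in S$ and $T_A \in A$, $\langle T_S, T_A\rangle_{\mathrm{HS}} = \langle P(T_S), T_A\rangle_{\mathrm{HS}} = \langle T_S, P(T_A)\rangle_{\mathrm{HS}} = 0$. This establishes $\mathcal{H} = S \oplus A$ as an orthogonal direct sum, completing the proof.
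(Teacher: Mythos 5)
Your proof is correct and complete. Note that the paper does not prove this lemma itself --- it is imported verbatim from \citet{elesedy2021provably} (their Lemma~1) --- and your argument is essentially the standard one given there: show that the range of $P$ lands in the equivariant subspace via invariance of the Haar measure, that $P$ fixes equivariant maps by normalisation (hence $P^2=P$ with range exactly $S$), and that $P$ is self-adjoint for the Hilbert--Schmidt pairing via the inversion-invariance of $\lambda$ together with unitarity, after which $\mathcal{H}=S\oplus A$ is the usual orthogonal splitting induced by an orthogonal projection. Two cosmetic remarks: in step (i) the substitution that actually cancels the $\pi(h)$ factors is $g\mapsto gh$ (right-invariance rather than left-invariance as you wrote), which is immaterial here since the Haar measure on a compact group is bi-invariant, as the paper's background section records; and you overload the symbol $S$ for both the equivariant subspace and the second argument of the inner product, which you should rename to avoid confusion.
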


In particular, every $T\in\mathcal{H}$ orthogonally decomposes uniquely as $T=P(T)+\big(T-P(T)\big)$, where $P(T)$ is the $G$-equivariant component $S$ and $T-P(T)\in A$ is its $G$-anti-symmetric component. Moreover, we have the following:
\begin{corollary}
    A function $T:(V,\pi)\!\to\!(V',\pi')$ is $G$-equivariant if and only if $P(T)=T$.
\end{corollary}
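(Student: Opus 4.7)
The plan is to derive the corollary as an immediate consequence of Lemma~\ref{lem:reynolds-decomp}, with an optional direct verification for concreteness. Set $\mathcal{H}$ to be any function space containing $T$ that is closed under $P$ (e.g.\ the full space $\mathrm{Hom}(V,V')$, which is closed under $P$ by construction of the Reynolds average). Lemma~\ref{lem:reynolds-decomp} identifies $P$ as the orthogonal projection with range $S = \{T : T$ is $G$-equivariant$\}$. Since a vector lies in the range of an orthogonal projection exactly when it is fixed by that projection, membership in $S$ is equivalent to satisfying $P(T) = T$, which is the claim.

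For the forward implication, I would also give a direct computation that does not appeal to the lemma: assuming $T \pi(g) = \pi'(g) T$ and using that $\pi'(g)$ is unitary so that $\pi'(g)^* \pi'(g) = I_{V'}$, the integrand in~\eqref{eqn:projection_operator} simplifies pointwise to $T$. Normalisation of the Haar measure ($\lambda(G) = 1$) then yields $P(T) = T$.

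For the backward implication, the direct verification is slightly more subtle: starting from $P(T) = T$, I would show that $P(T)$ itself is always equivariant, and then conclude by substitution. Concretely, for any $h \in G$,
\begin{equation}
\pi'(h)\,P(T) \;=\; \int_G \pi'(h)\pi'(g)^{*}\, T\, \pi(g)\, d\lambda(g) \;=\; \int_G \pi'(hg^{-1})\, T\, \pi(g)\, d\lambda(g),
\end{equation}
and the bi-invariance of $\lambda$ allows the substitution $g \mapsto g h$ so that the expression rewrites as $\bigl(\int_G \pi'(g)^{*}\, T\, \pi(g)\, d\lambda(g)\bigr) \pi(h) = P(T)\,\pi(h)$. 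Hence $T = P(T)$ satisfies $\pi'(h)T = T\pi(h)$ for all $h$.

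The main obstacle is bookkeeping rather than content: the only place that requires care is the change of variable in the backward direction, where I must use bi-invariance of the Haar measure (and the identities $\pi'(g)^{*} = \pi'(g^{-1})$ and $\pi(gh) = \pi(g)\pi(h)$) correctly so that the translate of the integrand by $h$ reassembles into $P(T)\pi(h)$. Everything else is a direct reading of Lemma~\ref{lem:reynolds-decomp}.
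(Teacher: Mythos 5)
Your proposal is correct and matches the paper, which states this corollary without a separate proof precisely because it is immediate from Lemma~\ref{lem:reynolds-decomp}: $P$ is an orthogonal projection with range the equivariant subspace $S$, and an element lies in the range of a projection iff it is fixed by it. Your supplementary direct verification (unitarity of $\pi'(g)$ plus normalisation for the forward direction, bi-invariance of the Haar measure under $g \mapsto gh$ for the backward direction) is also sound and adds nothing problematic.
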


\begin{figure}[t]
  \centering
  \begin{minipage}[t]{\linewidth}
  \includegraphics[width=\linewidth]{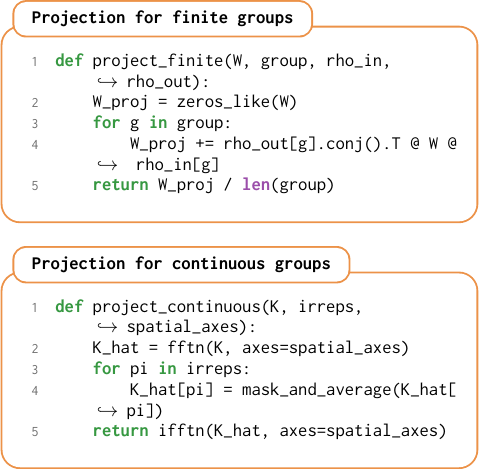}
  \end{minipage}
  \caption{Pseudo-code for the equivariant projection for finite (left) and continuous groups (right).}
  \label{alg:projection}
\end{figure}

\section{Equivariant Projection Regularisation}\label{sec:equiv_projection_regularisation}

Motivated by these observations, we propose a simple framework for learning (approximately) equivariant models: Let $\mathcal{H}$ be a hypothesis class and $L_\textrm{task}(T)$ a task-specific loss function for $T \in \mathcal{H}$. We learn $T$ by solving
\begin{align}
    T^* \in \arg\inf_{T \in \mathcal{H}} \; L_\textrm{task}(T) 
    &\;+\; \lambda_G \,\|P(T)\|_\mu^2 \\
    &\;+\; \lambda_\perp \,\|T - P(T)\|_\mu^2,
\end{align}
where $\lambda_G,\lambda_\perp \ge 0$ are hyperparameters. Intuitively, increasing $\lambda_\perp$
penalises $\|T - P(T)\|_\mu^2$ more strongly, which encourages $P(T) = T$, steering the solution toward stronger equivariance according to Lemma~\ref{lem:reynolds-decomp}. The hyperparameter $\lambda_G$ controls regularisation of the equivariant part. Since $P(T)$ and $T-P(T)$ are orthogonal components, we have $\|T\|_\mu^2=\|P(T)\|_\mu^2+\|T-P(T)\|_\mu^2$. Hence, if $T$ is linear and for $\lambda_G=\lambda_\perp$ this reduces to standard Frobenius weight regularisation $\lambda_G\|T\|_F^2$.

In what follows, we provide a theoretical justification for using $\|T - P(T)\|_\mu$ as a regulariser towards stronger equivariance. Recalling that $P(T)$ denotes the closest equivariant operator to $T$, we show that the distance $\|T - P(T)\|_\mu$ is quantitatively equivalent to a natural measure of non-equivariance, the \emph{equivariance defect}.

\subsection{Bounding the Equivariance Error}

\begin{definition}[Equivariance defect]
    Let $T$ be a function between $G$-modules with actions $\pi_{\mathrm{in}}$ and $\pi_{\mathrm{out}}$. The \textit{equivariance defect} at $g\in G$ is
    \begin{equation}
      \Delta_g(T) \;\coloneqq\; \pi_{\mathrm{out}}(g)\circ T \;-\; T\circ \pi_{\mathrm{in}}(g),
    \end{equation}
    and the \textit{worst-case defect} is
    \begin{equation}
      \mathcal{E}(T) \;\coloneqq\; \sup_{g\in G}\, \|\Delta_g(T)\|_\mu.
    \end{equation}
\end{definition}

\noindent
By Lemma~\ref{lem:reynolds-decomp} \citep{elesedy2021provably}, the quantity $\mathcal{E}(T)$ vanishes if and only if $T$ is $G$-equivariant. The next lemma shows that this defect is effectively controlled, up to constants, by the distance to the equivariant subspace measured by the projection $P$.

\begin{lemma}\label{lem:proj-defect}
    For every (Lipschitz) function $T$ between $G$-modules with unitary actions,
    \begin{equation}
        \|T- P(T)\|_\mu \;\le\; \mathcal{E}(T) \;\le\; 2\,\|T- P(T)\|_\mu.
    \end{equation}
\end{lemma}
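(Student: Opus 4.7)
The plan is to prove the two inequalities separately, in each case exploiting that $P(T)$ is itself $G$-equivariant (so $\Delta_g(P(T)) = 0$) together with the fact that the actions $\pi_{\mathrm{in}}, \pi_{\mathrm{out}}$ are unitary and hence have operator norm equal to one.

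For the upper bound $\mathcal{E}(T) \le 2\|T-P(T)\|$, I would first note that since $P(T)$ is equivariant, the defect is invariant under subtracting $P(T)$, i.e.\ $\Delta_g(T) = \Delta_g(T-P(T)) = \pi_{\mathrm{out}}(g)\,(T-P(T)) - (T-P(T))\,\pi_{\mathrm{in}}(g)$. Taking operator norms and applying the triangle inequality together with $\|\pi_{\mathrm{in}}(g)\| = \|\pi_{\mathrm{out}}(g)\| = 1$ gives $\|\Delta_g(T)\| \le 2\|T-P(T)\|$ uniformly in $g$, and the bound follows by taking the supremum.

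For the lower bound $\|T-P(T)\| \le \mathcal{E}(T)$, the key algebraic identity is
\begin{equation}
 T - P(T) \;=\; \int_G \bigl(T - \pi_{\mathrm{out}}(g)^{*}\, T\, \pi_{\mathrm{in}}(g)\bigr)\, d\lambda(g) \;=\; \int_G \pi_{\mathrm{out}}(g)^{*}\,\Delta_g(T)\, d\lambda(g),
\end{equation}
where the first equality uses that $\lambda$ is a probability measure and the second follows from multiplying the definition of $\Delta_g(T)$ by $\pi_{\mathrm{out}}(g)^{*}$ on the left and using unitarity. Taking operator norms, pulling the norm inside the integral, and again using $\|\pi_{\mathrm{out}}(g)^{*}\| = 1$ yields $\|T-P(T)\| \le \int_G \|\Delta_g(T)\|\, d\lambda(g) \le \sup_g \|\Delta_g(T)\| = \mathcal{E}(T)$, where the last step uses that $\lambda(G) = 1$.

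The only conceptual subtlety, and the one mild obstacle, is making sure that pulling the norm inside the Bochner-type integral is valid in the chosen norm; for the operator norm on bounded linear maps this is standard, and for the Lipschitz setting mentioned in the statement one should interpret $P(T)$ as the analogous Reynolds average and verify the same identity pointwise on vectors before taking norms. Everything else reduces to the unitarity of the representations and the equivariance of $P(T)$, both of which are established earlier in the excerpt.
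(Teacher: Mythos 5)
Your proposal is correct and follows essentially the same route as the paper's proof: the lower bound via the identity $T - P(T) = \int_G \pi_{\mathrm{out}}(g)^{*}\,\Delta_g(T)\,d\lambda(g)$ combined with unitarity and the normalised Haar measure, and the upper bound via the equivariance of $P(T)$, the triangle inequality, and taking the supremum over $g$. The paper phrases the norm estimates in terms of Lipschitz seminorms rather than operator norms, but for the linear/Lipschitz maps considered the argument is identical.
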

\begin{proof}
    See Appendix \ref{app:proof_lemma_proj_defect}
\end{proof}

Lemma~\ref{lem:proj-defect} shows that regularising by $\mathcal{E}(T)$ or by $\|T-P(T)\|$ is equivalent up to a factor of~$2$. Thus, minimising $\|T - P(T)\|$ minimises the worst-case defect.

\noindent
In practice, $T$ will be some type of neural network architecture and is hence a composition of functions. The following bound decomposes the global defect of a network into per-layer defects, weighted by downstream Lipschitz constants. 

\begin{lemma}\label{lem:defect_composition}
    Let $T = f_k\circ f_{k-1}\circ\cdots\circ f_1$ be a composition of Lipschitz maps between $G$-modules with unitary actions, and set $L_m := \mathrm{Lip}(f_m)$.
    Then
    \begin{equation}\label{eqn:decomposition}
        \mathcal{E}(T) \;\le\; \sum_{i=1}^k \Big(\prod_{m\neq i}^{k} L_m\Big)\, \mathcal{E}(f_i).
    \end{equation}
\end{lemma}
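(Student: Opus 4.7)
The plan is to reduce the bound to a single telescoping identity for $\Delta_g(T)$ and then apply the chain rule for Lipschitz constants. Write $T_{>i} := f_k \circ \cdots \circ f_{i+1}$ and $T_{\le i} := f_i \circ \cdots \circ f_1$, with the convention that empty compositions are the identity, and let $\pi_i$ denote the $G$-action on the input space of $f_i$, so that $\pi_1 = \pi_{\mathrm{in}}$ and $\pi_{k+1} = \pi_{\mathrm{out}}$. I would introduce the interpolating family
\begin{equation*}
   S_i \;:=\; T_{>i} \circ \pi_{i+1}(g) \circ T_{\le i}, \qquad i = 0, 1, \ldots, k,
\end{equation*}
whose endpoints are $S_0 = T \circ \pi_{\mathrm{in}}(g)$ and $S_k = \pi_{\mathrm{out}}(g) \circ T$, so that $\Delta_g(T) = S_k - S_0$.

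The core algebraic step is to compute a single telescope increment. Using $T_{>i-1} = T_{>i} \circ f_i$ and $T_{\le i} = f_i \circ T_{\le i-1}$, one can factor
\begin{equation*}
   S_i - S_{i-1} \;=\; T_{>i} \circ \bigl(\pi_{i+1}(g) \circ f_i - f_i \circ \pi_i(g)\bigr) \circ T_{\le i-1} \;=\; T_{>i} \circ \Delta_g(f_i) \circ T_{\le i-1}.
\end{equation*}
Summing this telescope from $i = 1$ to $k$ yields the key identity $\Delta_g(T) = \sum_{i=1}^k T_{>i} \circ \Delta_g(f_i) \circ T_{\le i-1}$, which isolates the per-layer defects.

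Next I would apply $\|\cdot\|$, read here as the Lipschitz seminorm. The triangle inequality together with submultiplicativity of Lipschitz constants under composition gives
\begin{equation*}
   \|\Delta_g(T)\| \;\le\; \sum_{i=1}^k \mathrm{Lip}(T_{>i})\, \|\Delta_g(f_i)\|\, \mathrm{Lip}(T_{\le i-1}) \;\le\; \sum_{i=1}^k \Bigl(\prod_{m \neq i} L_m\Bigr) \|\Delta_g(f_i)\|,
\end{equation*}
since $\mathrm{Lip}(T_{>i}) \le \prod_{m=i+1}^k L_m$ and $\mathrm{Lip}(T_{\le i-1}) \le \prod_{m=1}^{i-1} L_m$. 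Taking $\sup_{g \in G}$ on both sides and bounding $\|\Delta_g(f_i)\| \le \mathcal{E}(f_i)$ inside each summand delivers the claimed inequality.

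The only real subtlety, and the one place where the hypotheses matter, is notational: because the $f_i$ are only Lipschitz rather than linear, $\|\cdot\|$ must be interpreted as the Lipschitz seminorm rather than an operator norm in the classical sense, so that $\|\Delta_g(f_i)\|$ agrees with $\mathrm{Lip}(\Delta_g(f_i))$ used in the submultiplicativity step. Unitarity of $\pi_i(g)$ enters implicitly by guaranteeing $\mathrm{Lip}(\pi_i(g)) = 1$, so that inserting the group actions between consecutive layers in the definition of $S_i$ does not inflate any intermediate Lipschitz bounds. Beyond this bookkeeping, the argument is essentially forced by the telescope and the chain rule.
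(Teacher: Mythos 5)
Your proof is correct and follows essentially the same route as the paper's: the interpolating family $S_i$ is just an explicit way of iterating the paper's chain rule $\Delta_g(A\circ B)=(\Delta_g A)\circ B+A\circ(\Delta_g B)$, and both arguments arrive at the identical telescoping identity $\Delta_g(T)=\sum_i T_{>i}\circ\Delta_g(f_i)\circ T_{\le i-1}$ before applying Lipschitz submultiplicativity and the supremum over $g$. Your closing remark about reading $\|\cdot\|$ as a Lipschitz seminorm (so that ``factoring out'' the nonlinear $T_{>i}$ is justified at the level of norms rather than as an exact algebraic identity) addresses the one informality that the paper's own proof also glosses over.
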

\begin{proof}
    See Appendix \ref{app:proof_lemma_defect_composition}
\end{proof}

The bound above immediately yields the following corollary for standard feed-forward networks first shown by \citet{Kim2023}.

\begin{corollary}[\citet{Kim2023}]\label{cor:bound_neural_net}
    Let
    \begin{equation}
      T \;=\; W^{(S)} \circ \sigma_{S-1} \circ W^{(S-1)} \circ \cdots \circ \sigma_{1} \circ W^{(1)}
    \end{equation}
    be an $S$-layer network where each linear map $W^{(l)}$ acts between $G$-modules with unitary actions and each activation $\sigma_{l}$ is $G$-equivariant and Lipschitz. Then
    \begin{equation}
      \mathcal{E}(T)
      \;\le\;
      C \,\sum_{l=1}^{S} \bigl\|\,W^{(l)} - P\big(W^{(l)}\big)\,\bigr\|_F,
    \end{equation}
    for a constant $C>0$ depending only on the operator norms of the $W^{(l)}$, the Lipschitz constants of the $\sigma_l$, and (when working on a bounded input domain) its radius.
\end{corollary}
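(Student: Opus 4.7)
The plan is to invoke Lemma~\ref{lem:defect_composition} after decomposing $T$ into its interleaved linear and activation layers, and then apply Lemma~\ref{lem:proj-defect} to each linear layer individually. Concretely, I would first rewrite $T = f_{2S-1} \circ \cdots \circ f_1$ with $f_{2l-1} := W^{(l)}$ and $f_{2l} := \sigma_l$, so that Lemma~\ref{lem:defect_composition} yields
\[
\mathcal{E}(T) \;\le\; \sum_{i=1}^{2S-1} \Bigl(\prod_{m \ne i} L_m\Bigr)\, \mathcal{E}(f_i),
\]
where $L_m = \mathrm{Lip}(f_m)$.

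The next step exploits two structural properties of the layers. Since each $\sigma_l$ is assumed $G$-equivariant, $\Delta_g(\sigma_l) \equiv 0$ and hence $\mathcal{E}(\sigma_l) = 0$; only the $S$ linear terms survive the sum. For each surviving term I invoke Lemma~\ref{lem:proj-defect} on the linear map $W^{(l)}$, obtaining $\mathcal{E}(W^{(l)}) \le 2\,\|W^{(l)} - P(W^{(l)})\|$. Substituting both facts reduces the bound to a weighted sum of the projection residuals $\|W^{(l)} - P(W^{(l)})\|$, with weights given by products of Lipschitz and operator norms of the other layers.

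To extract a single constant $C$, I would bound each surviving coefficient $\prod_{m \ne 2l-1} L_m$ uniformly in $l$. Using $L_W := \max_l \|W^{(l)}\|$ for the operator norms (which coincide with the Lipschitz constants of the linear layers) and $L_\sigma := \max_l \mathrm{Lip}(\sigma_l)$ for the activations, every such product is at most $L_W^{S-1}\, L_\sigma^{S-1}$, so one may take $C = 2\, L_W^{S-1}\, L_\sigma^{S-1}$, matching the dependencies in the statement.

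The only real subtlety lies in the "bounded input domain" qualification. The operator-style norm in the definition of $\mathcal{E}$ and the Lipschitz constant of a nonlinear $\sigma_l$ are only comparable on bounded sets, and the Lipschitz constants of successive layers compose on the actual range of intermediate activations rather than on the whole Hilbert space. I expect this to be the one delicate point of the write-up: one must restrict all suprema to a ball of radius $R$ around the origin (or to the image of such a ball under the preceding layers) so that the $L_m$ are finite, after which the chain of inequalities above applies verbatim and the domain radius $R$ appears only implicitly inside the constant $C$.
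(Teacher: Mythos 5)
Your proposal is correct and follows essentially the same route as the paper's proof: the same interleaved decomposition $f_{2l-1}=W^{(l)}$, $f_{2l}=\sigma_l$, the same application of Lemma~\ref{lem:defect_composition} with $\mathcal{E}(\sigma_l)=0$ killing the even terms, and the same use of Lemma~\ref{lem:proj-defect} to convert $\mathcal{E}(W^{(l)})$ into $2\|W^{(l)}-P(W^{(l)})\|$. The only cosmetic difference is that you bound the surviving coefficients uniformly by $L_W^{S-1}L_\sigma^{S-1}$ while the paper takes $C=2\bigl(\prod_j \mathrm{Lip}(\sigma_j)\bigr)\max_k\prod_{r\neq k}\|W^{(r)}\|$; both yield a constant with the stated dependencies, and your remark on restricting to a bounded domain so the Lipschitz constants are finite is a reasonable elaboration of the paper's parenthetical caveat.
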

\begin{proof}
    See Appendix \ref{app:proof_cor_bound_neural_net}.
\end{proof}

\subsection{Projection in Fourier Space}

The previous section motivates the use of the norm of the projection operator as a regulariser. When the projection operator in Equation~\ref{eqn:projection_operator} is efficiently computable in the spatial domain, e.g., for small finite groups (see Section~\ref{sec:experiments_discrete_rotation_equivariance}), this is straightforward; the algorithm in Figure~\ref{alg:projection} provides pseudo-code for this case. However, in many applications, the group is large (for instance, uncountably infinite, as in $SO(n)$, the group of rotations about the origin in $\mathbb{R}^n$; see Sections~\ref{sec:so2_equivariance} and 
\ref{sec:medmnist}). 
In such cases, the integral in Equation~\ref{eqn:projection_operator} rarely admits a closed-form solution.

We therefore switch to the spectral domain. We assume the following setup, which is in line with the geometric deep learning blueprint~\citep{bronstein2021geometric} that constructs equivariant networks as a composition of equivariant linear layers with equivariance-preserving non-linearities. Let $G$ be a compact group with normalised Haar measure $\lambda$, and consider linear maps $T:L^2(G)\!\to\!L^2(G)$ on the Hilbert space of square-integrable complex functions $L^2(G)=\{f:G\rightarrow\mathbb{C}\}$ with inner product
\begin{align}
  \langle f,h\rangle=\int_G f(g)\,\overline{h(g)}\,d\lambda(g).
\end{align}
We study equivariance with respect to the (left) regular representation $\tau:G\to U(L^2(G))$ defined by
\begin{equation}
  (\tau(g)f)(x)=f(g^{-1}x),\qquad x,g\in G.
\end{equation}

\begin{figure}[t]
  \centering
  \begin{adjustbox}{max width=\linewidth}
  \begin{tikzcd}[
      row sep=3.8em,
      column sep=6em,
      cells={nodes={font=\small}}
    ]
      T : L^2(G,V_{\mathrm{in}}) \to L^2(G,V_{\mathrm{out}})
        \arrow[r, "P_{\mathrm{equiv}}" {name=Top}]
        \arrow[d, "Peter\text{--}Weyl"']
      & T_{\mathrm{equiv}} : L^2(G,V_{\mathrm{in}}) \to L^2(G,V_{\mathrm{out}})
        \arrow[d, "Peter\text{--}Weyl"]
      \\
      \widehat T \;=\; \big\{\widehat T(\pi,\sigma)\big\}_{\pi,\sigma\in\widehat G}
        \arrow[u]
        \arrow[r, "P:\; \{T_{\pi\sigma}\}\mapsto\{\delta_{\pi,\sigma}\,\mathrm{Av}_\pi(T_{\pi\pi})\}"'{yshift=-1.5ex}]
      & \widehat{T_{\mathrm{equiv}}}
        \;=\; \bigoplus_{\pi\in\widehat G} \!\big(I_{V_\pi}\otimes W_\pi\big)
        \arrow[u]
      \arrow[ul, phantom, "\scalebox{1.2}{$\circlearrowleft$}" description]
    \end{tikzcd}
  \end{adjustbox}
  \caption{Commutative diagrams showing how to apply the projection operator in Fourier space. We zero out off-diagonals \((\pi\neq\sigma)\) and average within each frequency block to obtain \(I_{V_\pi}\otimes B_\pi\).}
  \label{fig:commutative_diagrams}
\end{figure}

\begin{figure*}
    \centering
    \includegraphics[width=\linewidth]{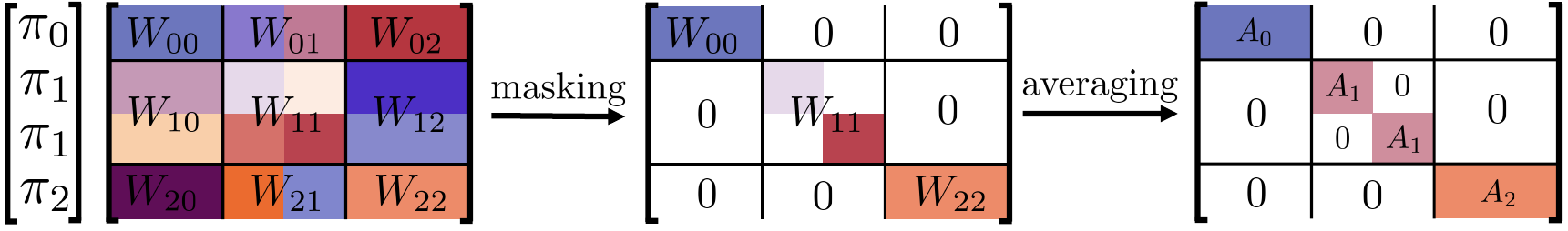}
    \caption{
    Projection of a linear map onto the equivariant subspace.
    The initial dense weight matrix mixes different irreducible grades
    \(\pi_0,\pi_1,\pi_2\), with \(\operatorname{dim}\pi_1 = 2\).
    The masking step removes all couplings between inequivalent grades,
    retaining only the isotypic blocks.
    Averaging then projects each retained block onto the equivariant form
    \(I_{\pi_i}\otimes A_i\), where \(A_i\) acts on the corresponding
    multiplicity space.
    }
    \label{fig:irreps_projection_sketch}
\end{figure*}

We denote by $\widehat{G}$ the set of equivalence classes of finite-dimensional irreducible representations of $G$ and call it the \emph{unitary dual} of $G$. Each $[\pi]\in\widehat G$ has a representative $\pi:G\to U(V_\pi)$ with $d_\pi=\dim V_\pi$. For $f\in L^2(G)$,  we define the \textit{(non-abelian) Fourier transform} as
\begin{equation}
    \widehat f_\pi\;:=\;\int_G f(g)\,\pi(g)^{*}\,d \lambda (g)\;\in\text{End}(V_\pi).
\end{equation}

\textcolor{blue}{
By the Peter--Weyl theorem~\citep{PeterWeyl1927}, these coefficients form a complete spectral representation of $f$. More explicitly, the Fourier inversion formula reconstructs scalar functions as
\begin{equation}
    f(g)
    =
    \sum_{\pi\in\widehat G}
    d_\pi\,
    \mathrm{tr}\!\left(
        \widehat f(\pi)\,\pi(g)
    \right),
\end{equation}
with convergence in $L^2(G)$, and pointwise under the usual additional regularity assumptions. Thus, passing to Fourier space retains all information about functions.
}

\textcolor{blue}{
Under the Peter--Weyl identification
\begin{equation}
    L^2(G)
    \cong
    \bigoplus_{\pi\in\widehat G} V_\pi\otimes V_\pi^*,
\end{equation}
a general linear operator $T:L^2(G)\to L^2(G)$ decomposes into frequency blocks
\begin{equation}
    \widehat T_{\pi\sigma}
    :
    V_\sigma\otimes V_\sigma^*
    \to
    V_\pi\otimes V_\pi^*.
\end{equation}
Equivalently, its action on Fourier coefficients can be written as
\begin{equation}
    \widehat{(Tf)}(\pi)
    =
    \sum_{\sigma\in\widehat G}
    \widehat T_{\pi\sigma}\,\widehat f(\sigma).
\end{equation}
A general operator may therefore mix different frequencies $\sigma\to\pi$. By Schur's lemma \citep{schur1905neue}, equivariance imposes a block-diagonal structure on these blocks: it forbids mixing between inequivalent irreducible components and restricts the surviving diagonal blocks. 
}
\begin{theorem}[Informal]
  \label{thm:equiv_decomp_informal}
  Equivariant linear maps are block-diagonal in the frequency domain (one block per irreducible representation). Hence, the projection onto equivariant subspaces acts by zeroing out all off-diagonal terms and averaging within.
\end{theorem}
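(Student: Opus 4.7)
The plan is to combine the Peter--Weyl theorem with Schur's lemma to analyse the block structure of equivariant maps, and then identify the Reynolds projection with the canonical orthogonal projection onto that structure. First, I would invoke Peter--Weyl to decompose
\[
L^2(G) \;\cong\; \bigoplus_{[\pi]\in\widehat G} V_\pi \otimes V_\pi^{*}
\]
as a $G$-module under the (left) regular representation $\tau$, where $G$ acts on the $V_\pi$ factor and trivially on $V_\pi^{*}$ (the multiplicity space). For vector-valued functions, $L^2(G,V)\cong L^2(G)\otimes V$ splits analogously. A linear map $T:L^2(G,V_{\mathrm{in}})\to L^2(G,V_{\mathrm{out}})$ is then equivalent to a collection of blocks $\widehat T(\pi,\sigma)$ indexed by pairs $(\pi,\sigma)\in\widehat G\times\widehat G$, acting between the corresponding isotypic components.

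Next, I would apply Schur's lemma block-by-block. For $\pi\neq\sigma$, the irreducibles $V_\pi$ and $V_\sigma$ are non-isomorphic, so any $G$-equivariant map between them vanishes; hence the $(\pi,\sigma)$-block of an equivariant map is zero. For $\pi=\sigma$, the $G$-action lives only on the $V_\pi$ factor, so Schur's lemma forces the equivariant component to have the form $I_{V_\pi}\otimes W_\pi$, where $W_\pi$ acts on the combined multiplicity space $V_\pi^{*}\otimes V_{\mathrm{in}}\to V_\pi^{*}\otimes V_{\mathrm{out}}$. This establishes the first half of the theorem: equivariant linear maps are block-diagonal in the frequency domain, and each diagonal block has the claimed tensor-product form.

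I would then identify the Reynolds operator $P$ from Equation~\ref{eqn:projection_operator} with this Fourier-side projection. By Lemma~\ref{lem:reynolds-decomp}, $P$ is the orthogonal projection (with respect to the Hilbert--Schmidt inner product) onto the equivariant subspace. Since the Peter--Weyl isomorphism is unitary, this must coincide with: zeroing out every off-diagonal block ($\pi\neq\sigma$), and within each diagonal block replacing $\widehat T(\pi,\pi)$ by its Schur-average $I_{V_\pi}\otimes \mathrm{Av}_\pi\!\bigl(\widehat T(\pi,\pi)\bigr)$, where $\mathrm{Av}_\pi$ is the partial trace over the $V_\pi$ factor normalised by $1/d_\pi$. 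Verifying that averaging the spatial-domain integrand $\tau(g)^{*}\,T\,\tau(g)$ against Haar measure coincides with this Fourier-space operation reduces to the Schur orthogonality relations for matrix coefficients.

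The main obstacle will be the careful bookkeeping of the Peter--Weyl isomorphism in the presence of the auxiliary coefficient spaces $V_{\mathrm{in}},V_{\mathrm{out}}$, and explicitly checking that Haar-averaging in the spatial domain corresponds precisely to the partial-trace operation $\mathrm{Av}_\pi$ on the correct tensor factor. This is ultimately a consequence of Schur orthogonality, but tracking indices through the two Peter--Weyl decompositions (on the source and target) is the delicate step.
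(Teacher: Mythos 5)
Your proposal follows essentially the same route as the paper's proof of the formal version (Theorem~\ref{thm:equiv_fourier}): Peter--Weyl to pass to blocks indexed by pairs of irreducibles, Schur's lemma to kill the off-diagonal blocks and to force each diagonal block into the form $I_{V_\pi}\otimes B_\pi$, and the unitarity of the Peter--Weyl isomorphism to identify the Reynolds average with this Fourier-side projection. The only minor caveat is that in the vector-valued setting (Theorem~\ref{thm:vector_block}) the diagonal-block operation is a Haar average twisted by the fiber representations $\rho_{\mathrm{in}},\rho_{\mathrm{out}}$ rather than a bare normalised partial trace over $V_\pi$, but your description is correct for the scalar case the theorem addresses and you explicitly flag this bookkeeping as the delicate step.
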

\textcolor{blue}{
We schematically depict how the projection acts on weight matrices in Figure \ref{fig:irreps_projection_sketch}.
}
Hence, whenever an efficient Fourier transform is available (e.g., on regular grids) or the model is already parameterised spectrally (e.g., eSEN \citep{fu2025learning}), imposing equivariance reduces to diagonalising the relevant linear operators in the spectral domain. The next section makes this more mathematically precise.

\subsection{Equivariant Maps are Diagonal Across Frequencies}

\begin{theorem}\label{thm:equiv_fourier}
    Let $T: L^2(G) \rightarrow  L^2(G)$ be a linear function which is equivariant with respect to the (left) regular representation, i.e. $\tau(g) \circ T = T \circ \tau(g)$ for all $g \in G$. Then $T$ decomposes as follows:
    \begin{equation}\label{eqn:T_decomposition}
        \widehat T \; \cong \; \bigoplus_{\pi\in\widehat G} I_{V_\pi} \otimes B_\pi
    \end{equation}
    for some $B_\pi \in \text{End}(V^*_\pi)$ (one for each $\pi$).
\end{theorem}
\begin{proof}
    See Appendix \ref{app:proof_thm_equiv}.
\end{proof}

This means that an equivariant linear map $T$ does not mix between irreps; it is block-diagonal. We now show what this means for the projection of a general linear operator $T$.

\begin{corollary}\label{cor:right-mult}
  Let $T:L^2(G) \rightarrow  L^2(G)$ be linear and set ${P_\textrm{equiv}(T)}$ to be its equivariant projection. Then for each $[\pi] \in \widehat G$, there exists $B_\pi \in \text{End}(V^*_\pi)$ such that for all $f \in L^2(G)$,
  \begin{equation}
        \widehat {P(T)(f)}(\pi) = \widehat f(\pi)\,B_\pi .
  \end{equation}
\end{corollary}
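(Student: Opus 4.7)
The plan is to treat this corollary as an almost immediate consequence of Theorem \ref{thm:equiv_fourier} once we identify $P_{\mathrm{equiv}}(T)$ as an element of the equivariant subspace. Concretely, I would argue: by Lemma \ref{lem:reynolds-decomp}, $P$ is the orthogonal projector onto the subspace $S$ of $G$-equivariant linear maps (with the function space $\mathcal H = \mathrm{Hom}(L^2(G),L^2(G))$ taken closed under $P$, which follows from Fubini/unitarity of $\tau$). In particular, $P(T)\in S$, i.e.\ $P(T)$ commutes with $\tau(g)$ for every $g\in G$.

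Next I would apply Theorem \ref{thm:equiv_fourier} directly to the operator $\widetilde T := P(T)$. The theorem produces, for each $[\pi]\in\widehat G$, an operator $B_\pi\in\mathrm{End}(V_\pi^*)$ such that on Fourier coefficients
\begin{equation}
   \widehat{\widetilde T(f)}(\pi)\;=\;\widehat f(\pi)\,\widehat k(\pi)\;=\;\widehat f(\pi)\,B_\pi^{*\,\vee}
\end{equation}
(using the identification $B_\pi\cong \widehat k(\pi)^*$ from the theorem). Substituting $\widetilde T=P(T)$ and absorbing the identification into the definition of $B_\pi$ gives exactly the stated identity
\begin{equation}
   \widehat{P(T)(f)}(\pi)\;=\;\widehat f(\pi)\,B_\pi.
\end{equation}

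The only nontrivial verification is the first step, namely that $P(T)$ is genuinely equivariant as an element of $\mathrm{Hom}(L^2(G),L^2(G))$. This is standard from the Reynolds construction: for any $h\in G$, a change of variables $g\mapsto h^{-1}g$ in the Haar integral (using bi-invariance of $\lambda$ and unitarity of $\tau$) gives $\tau(h)\,P(T)=P(T)\,\tau(h)$. Since the remaining work is purely an invocation of Theorem \ref{thm:equiv_fourier}, I do not anticipate a serious technical obstacle; the only mild subtlety is keeping the identification $B_\pi \leftrightarrow \widehat k(\pi)^*$ consistent with the convention used in equation \eqref{eqn:T_decomposition}, which amounts to bookkeeping rather than a new idea.
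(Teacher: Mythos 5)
Your proposal is correct and follows exactly the route the paper intends (the paper gives no explicit proof, treating the corollary as immediate): $P(T)$ lies in the equivariant subspace by the Reynolds-operator property, so Theorem~\ref{thm:equiv_fourier} applies to it and yields the Fourier-multiplier form. Your added verification that $\tau(h)P(T)=P(T)\tau(h)$ via bi-invariance of the Haar measure, and your remark about keeping the $B_\pi\cong\widehat k(\pi)^*$ identification consistent, are the only details the paper leaves implicit, and both are handled correctly.
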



\subsection{Vector-valued Signals and Fiber-wise Projection}

Thus far we treated scalar signals $f\!\in\!L^2(G)$. In many applications (e.g.\ steerable CNNs~\cite{cohen2017steerable}, tensor fields) one works with vector-valued signals taking values in a finite-dimensional unitary $G$-module $(V,\rho)$.
Define $L^2(G,V) \;\cong\; L^2(G)\otimes V$ with action
\begin{align}
  \big((\tau\!\otimes\!\rho)(g)f\big)(x)=\rho(g)\,f(g^{-1}x).
\end{align}
More generally, for an operator $T:L^2(G,V_{\mathrm{in}})\!\to\!L^2(G,V_{\mathrm{out}})$ we measure equivariance with respect to the pair of actions
\(
  \tau\!\otimes\!\rho_{\mathrm{in}}
\)
(on the domain)
and
\(
  \tau\!\otimes\!\rho_{\mathrm{out}}
\)
(on the codomain), i.e.
\begin{equation}
  (\tau\!\otimes\!\rho_{\mathrm{out}})(g)\,\circ\,T \;=\; T\,\circ\,(\tau\!\otimes\!\rho_{\mathrm{in}})(g)\qquad\forall g\in G.
\end{equation}

\textcolor{blue}{
Under the identification
\[
L^2(G,V)\cong \bigoplus_{\pi\in\widehat G} V_\pi\otimes V_\pi^*\otimes V,
\]
a general operator
\(T:L^2(G,V_{\mathrm{in}})\to L^2(G,V_{\mathrm{out}})\)
has blocks
\[
\widehat T_{\pi\sigma}
:
V_\sigma\otimes V_\sigma^*\otimes V_{\mathrm{in}}
\to
V_\pi\otimes V_\pi^*\otimes V_{\mathrm{out}}.
\]
As before, the equivariant projection again removes all \(\pi\neq\sigma\) blocks and
projects each diagonal block onto the appropriate intertwiner space, leading to the following theorem. Details are provided in Appendix \ref{app:vector-valued_signals}.
}


\begin{theorem}\label{thm:vector_block}
    Let $T:L^2(G,V_{\mathrm{in}})\!\to\!L^2(G,V_{\mathrm{out}})$ be linear. Then the equivariant projection decomposes as 
    \begin{align}
      \widehat{P_{\mathrm{equiv}}(T)}
      &\;\cong\;
      \bigoplus_{\pi\in\widehat G}
      \Big(I_{V_\pi}\otimes W_\pi\Big)
    \end{align}
    with\\
\resizebox{.9\linewidth}{!}{
  \begin{minipage}{\linewidth}
        \begin{align}
      W_\pi 
      &\;=\;
      \int_G
      \big(\pi(g)^{*}\!\otimes\!\rho_{\mathrm{out}}(g)\big)\,
      \widehat T(\pi,\pi)\,
      \big(\pi(g)\!\otimes\!\rho_{\mathrm{in}}(g)^{-1}\big)\;d\lambda(g).
    \end{align}
    \end{minipage}}\\
    In particular, every equivariant $T$ is block-diagonal across frequencies and acts as the identity on $V_\pi$ and as an intertwiner on the fiber–multiplicity space $V_\pi^{*}\!\otimes\!V$.
\end{theorem}


Hence, the equivariant projection can be computed efficiently in Fourier space. Given a linear map $T$, we (i) compute the Fourier transform of the matrix representation of $T$ to obtain the frequency blocks $\widehat T(\pi,\sigma)$; (ii) zero all off-diagonal blocks, setting $\widehat T(\pi,\sigma)\leftarrow 0$ for $\pi\neq\sigma$; (iii) for each $\pi$, project $\widehat T(\pi,\pi)$ onto $\mathrm{Hom}_G(\pi^{*}\!\otimes\!\rho_{\mathrm{in}},\pi^{*}\!\otimes\!\rho_{\mathrm{out}})$ using the averaging formula for $B_\pi$ above; and (iv) apply the inverse Fourier transform to obtain $P_{\mathrm{equiv}}(T)$ in the spatial domain.

This procedure is illustrated by the commutative diagram in Figure~\ref{fig:commutative_diagrams}, and a corresponding pseudo-code implementation is given in Algorithm~\ref{alg:projection} on the right.

\subsection{Asymptotic Cost}\label{sec:asymp_cost}

We now want to briefly comment on the computational complexity of calculating the projection for both finite and continuous groups.

\paragraph{Finite groups.}
For finite groups we use Equation~\ref{eqn:projection_operator} directly. 
For a linear layer with weights 
\(W \in \mathbb{C}^{d_{\mathrm{out}}\times d_{\mathrm{in}}}\) and 
\(N_\ell = d_{\mathrm{out}} d_{\mathrm{in}}\) parameters, the projection evaluates
\(\pi_{\mathrm{out}}(g)^{\ast} W \pi_{\mathrm{in}}(g)\) for each \(g\in G\), where
\(\pi_{\mathrm{out}}(g)\), \(\pi_{\mathrm{in}}(g)\) are the representation matrices.
Each step costs 
\(O(d_{\mathrm{out}}^{2} d_{\mathrm{in}}) + O(d_{\mathrm{out}} d_{\mathrm{in}}^{2})\),
which is \(O(d_{\mathrm{out}}^{3})\) under \(d_{\mathrm{in}} \sim d_{\mathrm{out}}\).
Since \(N_\ell \sim d_{\mathrm{out}}^{2}\), this is \(O(N_\ell^{3/2})\) per group element, and
\(O\bigl(|G| \, N_\ell^{3/2}\bigr)\).
\paragraph{Continuous groups.}
For continuous groups, we use the Fourier-domain projection. If a model is parameterised spectrally, masking and averaging a weight matrix \(W \in \mathbb{C}^{d_{\mathrm{out}}\times d_{\mathrm{in}}}\) cost \(O(N_\ell)\). We note that this is precisely the regime of steerable CNNs, where kernels are
parameterised directly in such blocks.

\section{Experiments}\label{sec:experiments}

In this section, we conduct three sets of experiments to demonstrate the feasibility and efficiency of our approach to learn (approximate) equivariance from data. For implementation details and information on hyperparameters, see Appendix \ref{app:details}.\footnote{Source code for reproducing the experiments will be released with the camera-ready version.}

\subsection{Example: Learned $SO(2)$ Invariance}\label{sec:so2_equivariance}

\begin{figure*}[t]
    \centering
    \includegraphics[width=1.0\linewidth]{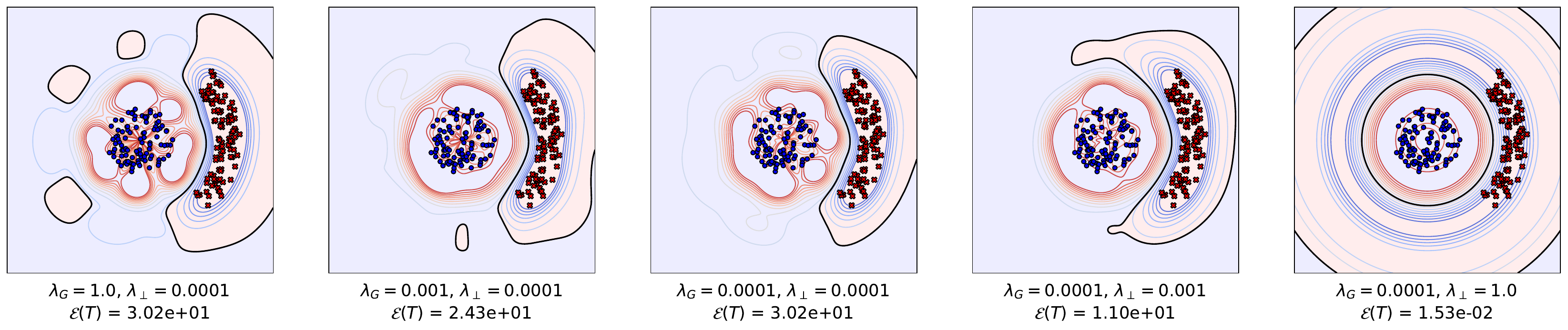}
    \caption{Controlling the degree of learned $SO(2)$ invariance by tuning the parameters $\lambda_G$ and $\lambda_\perp$, which penalise the projections of the equivariant and non-equivariant components, respectively.}
    \label{fig:so2_vary_inv_levels}
\end{figure*}

\begin{figure*}[!t]
    \centering
    \includegraphics[width=1.0\linewidth]{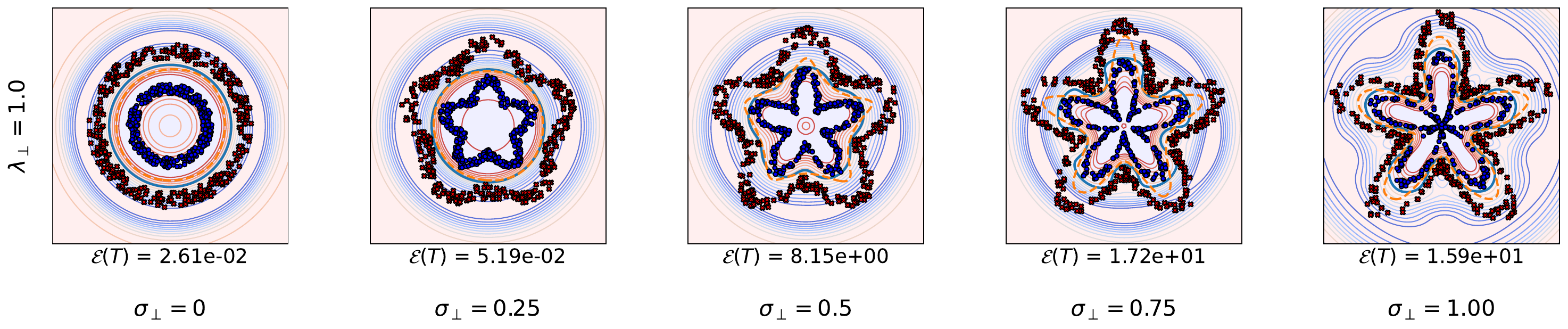}
    \caption{Effect of increasing angular perturbation at fixed projection strength. Each panel shows the decision boundary and level sets of the approximately $\mathrm{SO}(2)$-invariant network (blue) and an MLP (orange) trained with a fixed non-equivariant penalty $\lambda_\perp = 1.0$ on datasets with growing angular ``wave'' amplitude $\sigma_\perp$ (left to right). As $\sigma_\perp$ increases, the decision boundary becomes more angle-dependent and the learned classifier departs from perfect radial symmetry only where required to fit the data, while remaining nearly circular elsewhere. The empirical invariance defect $\mathcal{E}(T)$ for each setting is reported beneath the corresponding panel.}
    \label{fig:wavey_rings_lambda_one_row}
\end{figure*}

We first want to illustrate the approach in Section~\ref{sec:equiv_projection_regularisation} on a simple toy problem (Figure~\ref{fig:so2_vary_inv_levels}). The task is binary classification on two point clouds in $\mathbb{R}^2$. Using polar coordinates $(r,\theta)$, we sample an inner disk-shaped cloud (blue, label $+1$), and the outer angular section of an annulus (red, label $-1$). We then train an approximately $\mathrm{SO}(2)$-invariant MLP with the following structure on this dataset: We first project inputs $(x, y)\in\mathbb{R}^2$ onto circular harmonics up to degree $M$, adding $C$ radial channels via radial embedding functions, to obtain equivariant irreps features
\(
H \;\in\; \mathbb{C}^{(2M+1)\times C_{\text{hid}}}.
\)
We then apply two fully connected complex linear layers
\begin{align}
    L_i:\ \mathbb{C}^{(2M+1)\times C_\text{hid}}\!\to\mathbb{C}^{(2M+1)\times C_{\text{hid}}}, 
\end{align}
followed by an $\mathrm{SO}(2)$-equivariant tensor product. Lastly, we extract the invariant component and pass its real part through a final real-valued linear head $L_{\text{final}}:\ \mathbb{R}^{C_{\text{hid}}}\!\to\mathbb{R}$ to produce the scalar logit. For a more in-depth description of this architecture, see Appendix \ref{app:details_so2}.

In this setting, the projection onto the equivariant subspace reduces to masking. Let $W_i\in\mathbb{C}^{((2M+1)C)\times((2M+1)C)}$ denote the flattened weight matrix of an intermediate linear layer. Define the mask $M\in\mathbb{R}^{((2M+1)C)\times((2M+1)C)}$ by
\[
M_{(m_1,c_1),(m_2,c_2)} \;=\; \delta_{m_1,m_2},
\]
i.e., only blocks with matching harmonic order $m$ are kept. The projected weights are $P(W_i)=M\odot W_i$, where $\odot$ denotes elementwise multiplication. The overall objective is
\[
L \;=\; L_{\text{task}} \;+\; \lambda_G \sum_i \|W_i\| \;+\; \lambda_\perp \sum_i \|W_i - M\odot W_i\|,
\]
with $\lambda_G,\lambda_\perp\ge 0$ and $L_{\text{task}}$ the standard classification loss.

In Figure~\ref{fig:so2_vary_inv_levels}, we compare trained models across different values of $(\lambda_G,\lambda_\perp)$. From left to right, we first reduce $\lambda_G$ and then increase $\lambda_\perp$, enforcing progressively stronger invariance. For a full 2D grid for different combinations of $(\lambda_G,\lambda_\perp)$ see Figure~\ref{fig:2d_grid_vary_equiv} in Appendix~\ref{app:details_so2}. As the regularisation intensifies, the decision boundary becomes increasingly $\mathrm{SO}(2)$-invariant, confirming that the proposed projection-based regulariser effectively pushes the model toward invariance. Consistently, the empirical equivariance defect
\begin{equation}
    \mathcal{E}_\textrm{emp}(T) \;=\; \sum_{k,l}\! \Big\| \rho_{\mathrm{out}}(g_l)\,T(x_k)\;-\;T\big(\rho_{\mathrm{in}}(g_l)\,x_k\big)\Big\|
\end{equation}
with $k$ ranging over data samples and $g_l$ drawn as random rotations in $\mathrm{SO}(2)$, decreases from left to right.

In a second experiment, we probe the behaviour of the regulariser when the target function departs from exact $\mathrm{SO}(2)$-invariance by making the labels increasingly dependent on the polar angle. Starting from two concentric rings, we introduce an angular ``wave'' perturbation of amplitude $\sigma_\perp$ in the radial direction, such that for $\sigma_\perp = 0$ the data distribution is rotationally symmetric, whereas larger $\sigma_\perp$ produce interlocking rings (Figure~\ref{fig:wavey_rings_lambda_one_row} in appendix C). We train the approximately $\mathrm{SO}(2)$-invariant network with projection-based regularisation alongside a plain MLP baseline on these datasets and compare both the learned decision boundaries and the empirical defect $\mathcal{E}_\textrm{emp}(T)$. As $\sigma_\perp$ increases, the regularised model departs from strict invariance only insofar as needed to fit the angularly perturbed rings. This illustrates how the projection penalty (even for constant values of $\lambda_\perp$) furnishes a tunable bias toward invariance that can be gradually traded off against fitting angle-dependent structure in the data. For a full grid, where we also vary the value of $\lambda_\perp$, see Figure~\ref{fig:wavey_rings_lambda_grid} in Appendix~\ref{app:details_so2}.

\begin{table*}[ht]
    \centering
    \scriptsize 
    \caption{Results on three synthetic smoke-plume datasets exhibiting approximate symmetries. {We report means and standard deviations of pixel-wise MSE over 5 random seeds.} \textit{Future} indicates that the test set occurs after the training period; \textit{Domain} indicates that training and test sets come from different spatial regions. Adding our proposed \textit{equivariance regulariser} \textcolor{blue}{(\texttt{+Reg})} consistently improves performance.}
    \resizebox{1.0\textwidth}{!}{%
        \begin{tabular}{c | c | c c c c c c c c c}
            \toprule
            \multicolumn{2}{c}{Model} & \texttt{Conv} & \texttt{Equiv} & \texttt{Rpp} & \texttt{CLCNN} & \texttt{Lift} & \texttt{RGroup} & \texttt{+Reg} & \texttt{RSteer} & \texttt{+Reg} \\
            \midrule
            \multirow{2}{*}{Translation} 
            & Future &  --- & $0.94 {\pm \scriptstyle 0.02}$ & $0.92 {\pm \scriptstyle 0.01}$ & $0.92 {\pm \scriptstyle 0.01}$ & $0.87 {\pm \scriptstyle 0.03}$ & $\mathbf{0.71 {\pm \scriptstyle 0.01}}$ &  $\mathbf{0.72 {\pm \scriptstyle 0.01}}$ & --- & --- \\
            & Domain & --- & $0.68 {\pm \scriptstyle 0.05}$ & $0.93 {\pm \scriptstyle 0.01}$ & $0.89 {\pm \scriptstyle 0.01}$ & $0.70 {\pm \scriptstyle 0.00}$ & $\mathbf{0.62 {\pm \scriptstyle 0.02}}$ & $\mathbf{0.62 {\pm \scriptstyle 0.01}}$ & --- &  --- \\
            \cmidrule(lr){1-11}
            \multirow{2}{*}{Rotation} 
            & Future & $1.21 {\pm \scriptstyle 0.01}$ & $1.05 {\pm \scriptstyle 0.06}$ & $0.96 {\pm \scriptstyle 0.10}$ & $0.96 {\pm \scriptstyle 0.05}$ & $0.82 {\pm \scriptstyle 0.08}$ & $0.82 {\pm \scriptstyle 0.01}$ & $0.80 {\pm \scriptstyle 0.01}$ &  $0.80 {\pm \scriptstyle 0.00}$ & $\mathbf{0.79 {\pm \scriptstyle 0.00}}$ \\
            & Domain & $1.10 {\pm \scriptstyle 0.05}$ & $0.76 {\pm \scriptstyle 0.02}$ & $0.83 {\pm \scriptstyle 0.01}$ & $0.84 {\pm \scriptstyle 0.10}$ & $0.68 {\pm \scriptstyle 0.09}$ & $0.73 {\pm \scriptstyle 0.02}$ & $0.67 {\pm \scriptstyle 0.01}$  & $0.67 {\pm \scriptstyle 0.01}$ & $\mathbf{0.58 {\pm \scriptstyle 0.00}}$ \\
            \cmidrule(lr){1-11}
            \multirow{2}{*}{Scaling} 
            & Future & $0.83 {\pm \scriptstyle 0.01}$ & $0.75 {\pm \scriptstyle 0.03}$ & $0.81 {\pm \scriptstyle 0.09}$ & $1.03 {\pm \scriptstyle 0.01}$ & $0.85 {\pm \scriptstyle 0.01}$ & $0.80 {\pm \scriptstyle 0.01}$ & $0.81 {\pm \scriptstyle 0.00}$ & $0.70 {\pm \scriptstyle 0.01}$ & $\mathbf{0.62 {\pm \scriptstyle 0.01}}$\\
            & Domain & $0.95 {\pm \scriptstyle 0.02}$ & $0.87 {\pm \scriptstyle 0.02}$ & $0.86 {\pm \scriptstyle 0.05}$ & $0.83 {\pm \scriptstyle 0.05}$ & $0.77 {\pm \scriptstyle 0.02}$ & $0.88 {\pm \scriptstyle 0.01}$ & $0.88 {\pm \scriptstyle 0.02}$ & $0.73 {\pm \scriptstyle 0.01}$ & $\mathbf{0.69 {\pm \scriptstyle 0.01}}$\\
            \bottomrule
        \end{tabular}
    }
\end{table*}

\subsection{Imperfectly Symmetric Dynamical Systems}

In this section, we follow the experimental design of \citet{Wang2022} and evaluate our regulariser when applied to their relaxed group and steerable convolutional layers. Using \texttt{PhiFlow} \citep{holl2024phiflow}, we generate $64\times64$ two-dimensional smoke advection–diffusion simulations with varied initial conditions under relaxed symmetries. Each network is trained to predict the velocity field one step ahead.
To test generalisation, we consider two out-of-distribution settings. In the \emph{Future} setting, models predict velocity fields at time steps that are absent from the training distribution, while remaining within spatial regions that were seen during training. In the \emph{Domain} setting, we evaluate at the same time indices as training but at spatial locations that were not seen. The data are produced to break specific symmetries in a controlled way: for \emph{translation}, we generate series for $35$ distinct inflow positions and split the domain horizontally into two subdomains with different buoyancy forces so that plumes diffuse at different rates across the interface; for \emph{discrete rotation}, we simulate $40$ combinations of inflow position and buoyancy, where the inflow pattern alone is symmetric under $90^\circ$ rotations about the domain centre but a position-dependent buoyancy factor breaks rotational equivariance; and for \emph{scaling}, we run $40$ simulations with different time steps $\Delta t$ and spatial resolutions $\Delta x$ to disrupt scale equivariance.%
We compare the relaxed group convolutional networks (\texttt{RGroup}) and relaxed steerable CNNs (\texttt{RSteer}) introduced by \citet{Wang2022} with several baselines: a standard CNN (\texttt{Conv}), an equivariant convolutional network (\texttt{Equiv}) \citep{weiler2019general, sosnovikscale}, Residual Pathway Priors (\texttt{RPP}) \citep{finzi2021residual}, a locally connected network with an explicit equivariance penalty in the loss (\texttt{CLNN}) and \texttt{Lift} \citep{wang2022equivariant}. We indicate the addition of our regulariser with the suffix \texttt{+Reg}.

Across these settings, incorporating our regulariser preserves performance when approximate translation equivariance holds and delivers substantial improvements in the rotation and scaling regimes. In short, the penalty promotes the desired approximate equivariance where symmetry is only partially present, without degrading accuracy where the symmetry is already well aligned with the data.



\begin{table}[ht]
    \centering
    \scriptsize
    \setlength{\tabcolsep}{0.5pt}
    \renewcommand{\arraystretch}{1.05}
    \caption{{CT metal artefact reduction on AAPM. We report PSNR/SSIM, training throughput (batch sizes $4$/$12$ where stable), inference throughput, epoch time, and peak memory; sample-based regularisation is limited to $\leq 4$, while baselines and ours scale to $12$}. Baselines: $^1$\citet{wang2022acdnet}, $^2$\citet{Bai_2025_CVPR}, $^3$\citet{finzi2021residual}, $^4$\citet{wang2021dicdnet}, $^5$ \citet{wang2023oscnet}}
    \begin{tabular}{lccccccc}
        \toprule
        Model &
        \#params &
        \multicolumn{2}{c}{Throughput} &
        Epoch&
        Mem&
        \multicolumn{2}{c}{AAPM} \\
         &
         &
        \multicolumn{2}{c}{(no./GPU-s)} &
        &
         &
        \multicolumn{2}{c}{} \\
        \cmidrule(lr){3-4}\cmidrule(lr){7-8}
        & &
        Train $\uparrow$ &
        Inf. $\uparrow$ &
        time (s) $\downarrow$ &
        (GB) &
        PSNR $\uparrow$ &
        SSIM $\uparrow$ \\
        \midrule
        \texttt{ACDNet}$^1$          & 4.2M & 4.90/5.16 & 8.40  & 1108  & 11.08  & 42.08          & 0.9559 \\
        \enspace + sample-based$^2$      & 4.2M & 2.54/-- & 8.38 & 2011 & 21.99 & 40.02          & \textbf{0.9623} \\
        \enspace + test-time projection                    & 4.2M & --        & --    & --   & --    & 23.63          & 0.8384 \\
        \enspace + RPP$^3$           & 6.9M & 3.49/4.14 & 5.37  & 1455 & 11.15 & 37.12            & 0.9413      \\
        \enspace + projection-based (ours)                 & 4.2M & 4.25/4.99 & 7.44  & 1202   & 11.11 & \textbf{42.68} & 0.9620 \\
        \midrule
        \texttt{DICDNet}$^4$        & 4.3M & 8.38/9.72 & 11.86    & 632   & 10.90    & 41.44          & 0.9468 \\
        \enspace + sample-based$^2$      & 4.3M & 4.05/-- & 10.15 & 1303 & 23.93 & 41.47          & 0.9464 \\
        \enspace + test-time projection                    & 4.3M & --        & --    & --   & --    & 41.59          & 0.9602 \\
        \enspace + RPP$^3$           & 6.6M & 3.10/6.10 & 6.93  & 1028 & 12.08 & 39.42             & 0.9481      \\
        \enspace + projection-based (ours)                 & 4.3M & 5.77/7.82 & 10.11 & 782  & 12.05 & \textbf{41.52} & \textbf{0.9605} \\
        \midrule
        \texttt{OSCNet}$^5$          & 4.3M & 8.59/9.86 & 12.00 & 624  & 10.37    & \textbf{42.36} & 0.9596 \\
        \enspace + sample-based$^2$      & 4.3M & 4.05/-- & 10.13 & 1304 & 23.93 & 41.50          & 0.9593 \\
        \enspace + test-time projection                    & 4.3M & --        & --    & --   & --    & 41.37          & 0.9609 \\
        \enspace + RPP$^3$           & 6.6M & 4.51/6.14   & 6.92  & 1016 & 12.08 & 39.45             & 0.9507      \\
        \enspace + projection-based (ours)                 & 4.3M & 5.66/7.87   & 10.14 & 769   & 12.05 & 41.88          & \textbf{0.9612} \\
        \bottomrule
    \end{tabular}
    \label{tab:ct_mar}
\end{table}

\subsection{CT-Scan Metal Artifact Reduction}\label{sec:experiments_discrete_rotation_equivariance}

We compare our approach with a sample-based equivariance penalty on metal artefact reduction (MAR) for CT scans. Metal implants introduce characteristic streaking artefacts that obscure clinically relevant structures. The task is to map a corrupted slice to its artefact-reduced counterpart.
We use the AAPM CT-MAR Grand Challenge datasets \citep{aapm_ct_mat, aapm_ct_mat_tool}, comprising $14{,}000$ head and body CT slices with synthetic metal artefacts (Table~\ref{tab:ct_mar} and Appendix~\ref{app:ct_mar}, Figure~\ref{fig:ct_mar} for a visual comparison). The datasets were generated with the open-source CT simulation environment XCIST \citep{wu2022xcist}, using a hybrid data-simulation framework that combines publicly available clinical images \citep{yan2018deepLesion, goren2017uclh} and virtual metal objects.
Following \citet{Bai_2025_CVPR}, we adapt three convolution-based architectures \texttt{ACDNet} \citep{wang2022acdnet}, \texttt{DICDNet} \citep{wang2021dicdnet}  and \texttt{OSCNet} \citep{wang2023oscnet} by encouraging rotation equivariance with respect to the discrete group $C_4$ (rotations by multiples of $90^\circ$). We compare the unregularised baselines, the sample-based regulariser of \citet{Bai_2025_CVPR}, and the same networks equipped with our projection-based regulariser. {Additionally, we compare with Residual Pathway Priors (RPPs)~\citep{finzi2021residual} and a train-then-project variant, in which we first train a non-equivariant model and then project its linear layers onto the equivariant subspace at test time using our projection operator.}

For steerable CNN layers whose channels are organised into orientation groups of four, the layer-wise projection acting on a kernel
$K \in \mathbb{R}^{C'_{\mathrm{out}}\times C'_{\mathrm{in}}\times 4 \times 4 \times s \times s}$ is
\begin{equation}
    \label{eq:c4_group_average_main_text}
    P_{\mathrm{equiv}}(K) \;=\; \tfrac14 \sum_{r=0}^{3} S^{r}\, \big(\mathrm{rot}_{r} K\big)\, S^{-r},
\end{equation}
where $S$ is the $4\times 4$ cyclic-shift matrix on orientation channels and $\mathrm{rot}_r$ rotates the spatial kernel by $90^\circ r$. For a derivation of this expression, see Appendix~\ref{app:c4_steerable_projection}.

In contrast, \citet{Bai_2025_CVPR} penalise a term that samples both a data point and a group element. For each sample $x$ they draw a random $r\in C_4$ and add
\begin{equation}
    \label{eq:c4_sample_penalty_inst}
    L_{\mathrm{equiv}}(x, r) \;=\; 
    \big\|
    S^r\,\mathrm{rot}_r\,K(x)\;-\;K\!\big(S^r\,\mathrm{rot}_r x\big)
    \big\|^2
\end{equation}
to the task loss. {This requires an extra forward pass for each sampled rotation and each data sample, with asymptotic cost $O(N_{\textrm{samples}} \cdot \textrm{cost}_\textrm{forward})$ where $N_\textrm{samples}$ is the number of sampled group elements and $\textrm{cost}_\textrm{forward}$ is the cost of a single forward pass. By contrast, as derived in Section~\ref{sec:asymp_cost}, our projection-based regulariser $\|P_\textrm{equiv}(\cdot)\|$ incurs a cost that is linear in the number of parameters, does not sample rotations or data, introduces no extra forward passes, and has zero estimator variance.}

In the fixed-batch setting, we use batch size $4$ for all methods. In the max-feasible setting, the sample-based regulariser remains at batch size $4$ (limited by the extra forward/activation memory), whereas the baselines and our projection-based regulariser scale to batch size $12$ due to unchanged per-sample compute and memory. Our projection-based regulariser delivers competitive or superior reconstruction quality, surpassing the sample-based penalty in all metrics across all settings but one, and improving over the unregularised baselines in most cases. Owing to the extra forward pass in Equation \ref{eq:c4_sample_penalty_inst}, the sample-based approach is constrained to smaller batch sizes and lower throughput. Even under the fixed-batch protocol, its throughput is $42$–$47\%$ lower than ours; under the max-feasible protocol, the gap widens to $54$–$61\%$. These results indicate that projection-based regularisation achieves stronger $C_4$-equivariance with better hardware efficiency by avoiding per-sample group sampling. {Similarly, due to the overparameterisation of the equivariant subspace, RPPs incur slower runtime during both training and inference and require more learnable parameters, and still underperform our approach in reconstruction quality.}

\subsection{MedMNIST}\label{sec:medmnist}

To evaluate our projection-based regulariser for \emph{3D} steerable CNNs under \emph{continuous} approximate symmetries, we follow the MedMNIST v2 \citep{medmnistv2} experiments of \citet{veefkind24}. We consider the 3D classification tasks \texttt{Nodule}, \texttt{Synapse}, and \texttt{Organ}, which exhibit different degrees of (approximate) rotational symmetry.
We compare a standard 3D CNN (\texttt{CNN}) to steerable CNNs (\texttt{SCNN}s) equivariant to $SO(3)$ and $O(3)$ \citep{weiler20183dsteerablecnnslearning}, \texttt{RPP}s \citep{finzi2021residual}, and partially equivariant steerable CNNs (\texttt{P-SCNN}) from \citet{veefkind24}. To apply our method, we use the ESCNN library \citep{cesa2022a} to decompose each 3D convolutional kernel into equivariant and non-equivariant components, and regularise the two parts separately to encourage approximate $SO(3)$- or $O(3)$-equivariance.
\textcolor{blue}{
Table~\ref{tab:medmnist} reports test accuracy over $3$ seeds, and Table~\ref{tab:model_efficiency} reports the corresponding parameter counts and runtime metrics.
}

For \texttt{Nodule}, most methods achieve comparable accuracy, with \texttt{RPP} as the main exception. In \texttt{Synapse}, (partial) rotational symmetry appears to be a helpful inductive bias: fully equivariant \texttt{SCNN}s improve over the \texttt{CNN}, while the best results are achieved by \texttt{P-SCNN} and the proposed projection-based method. By contrast, \texttt{Organ} seems to penalise strict equivariance: fully equivariant models underperform non-invariant alternatives, consistent with equivariance preventing left-right discrimination (see Appendix~\ref{app:confusion}). This asymmetry is further supported by the sensitivity study in Appendix~\ref{app:sensitivity_lambdas_(s)o3}, where smaller values of $\lambda_\perp$ show to perform best on \texttt{Organ}, whereas \texttt{Nodule} and \texttt{Synapse} benefit from larger $\lambda_\perp$. The strongest performance here comes from the proposed method and \texttt{RPP}. 
\textcolor{blue}{
At the same time, we note that our proposed method achieves higher training and inference throughput than the other considered models, and also attains the lowest epoch time. 
}

\textcolor{blue}{
Taken together, the results indicate that the proposed projection-based regulariser provides a favourable accuracy--efficiency trade-off. It remains robust across varying symmetry levels, retains the benefits of approximate equivariance, and avoids the substantial overhead of partial steerable CNNs and residual pathway priors.
}

\begin{table}[t]
\centering
\scriptsize
\setlength{\tabcolsep}{3.8pt}
\renewcommand{\arraystretch}{1.05}
\caption{Test accuracies on MedMNIST datasets. Results for baselines (CNN, RPP, (P)-SCCN) are from \citet{veefkind24}.}
\label{tab:medmnist}
\begin{tabular}{ll|ccc}
\toprule
\makecell{\textbf{Network}\\\textbf{Group}} &
\makecell{\textbf{Partial}\\\textbf{Equivariance}} &
\textbf{Nodule} & \textbf{Synapse} & \textbf{Organ} \\
\midrule\midrule
\texttt{CNN} & N/A &
$\underline{0.873} \pm \scriptstyle 0.005$ &
$\underline{0.716} \pm \scriptstyle 0.008$ &
$\underline{0.920} \pm \scriptstyle 0.003$ \\
\midrule
\multirow{4}{*}{$SO(3)$} & \texttt{None} &
$\underline{0.873} \pm \scriptstyle 0.002$ &
$0.738 \pm \scriptstyle 0.009$ &
$0.607 \pm \scriptstyle 0.006$ \\
& \texttt{RPP} &
$0.801 \pm \scriptstyle 0.003$ &
$0.695 \pm \scriptstyle 0.037$ &
$\underline{0.936} \pm \scriptstyle 0.002$ \\
& \texttt{P-SCNN} &
$0.871 \pm \scriptstyle 0.001$ &
${0.770} \pm \scriptstyle 0.030$ &
$0.902 \pm \scriptstyle 0.006$ \\
& \texttt{ours} &
$0.862 \pm \scriptstyle 0.018$ &
$\underline{\textbf{0.774}} \pm \scriptstyle 0.025$ &
$0.933 \pm \scriptstyle 0.000$ \\
\midrule
\multirow{4}{*}{$O(3)$} & \texttt{None} &
$0.868 \pm \scriptstyle 0.009$ &
$0.743 \pm \scriptstyle 0.004$ &
$0.592 \pm \scriptstyle 0.008$ \\
& \texttt{RPP} &
$0.810 \pm \scriptstyle 0.013$ &
$0.722 \pm \scriptstyle 0.023$ &
$0.940 \pm \scriptstyle 0.006$ \\
& \texttt{P-SCNN} &
$\underline{\textbf{0.873}} \pm \scriptstyle 0.008$ &
$0.769 \pm \scriptstyle 0.005$ &
$0.905 \pm \scriptstyle 0.004$ \\
& \texttt{ours} &
$0.860 \pm \scriptstyle 0.020$ &
$\underline{{0.773}} \pm \scriptstyle 0.025$ &
$\underline{\textbf{0.943}} \pm \scriptstyle 0.005$ \\
\bottomrule
\end{tabular}
\vspace{-0.3cm}
\end{table}


\begin{table}[t]
\centering
\scriptsize
\setlength{\tabcolsep}{4pt}
\renewcommand{\arraystretch}{0.95}
\newcolumntype{C}{>{\centering\arraybackslash}p{1.45cm}}
\caption{Computational efficiency on MedMNIST. We report the number of parameters, training and inference throughput, and epoch time.}
\label{tab:model_efficiency}
\resizebox{\linewidth}{!}{%
\begin{tabular}{l|c|C C|c}
\toprule
\multirow{2}{*}{\textbf{Model}} &
\multirow{2}{*}{\textbf{\#params}} &
\multicolumn{2}{c|}{\textbf{Throughput (samples/s)} $\uparrow$} &
\multirow{2}{*}{\makecell{\textbf{Epoch}\\\textbf{time (s)} $\downarrow$}} \\[-0.15em]
\cmidrule(lr){3-4}
& & \textbf{Train} & \textbf{Inference} & \\[-0.15em]
\midrule
\texttt{CNN}        & 18.1M & 365.3 & 897.1  & 3.2  \\
\midrule
\texttt{SO(3)-SCNN} & 0.1M  & 196.7 & 666.7  & 5.9  \\
\texttt{P-SCNN}     & 11.1M & 82.4  & 665.9  & 14.1 \\
\texttt{RPP}        & 18.3M & 130.7 & 391.6  & 9.1  \\
\midrule
\texttt{ours}       & 2.0M  & \textbf{638.2} & \textbf{5606.5} & \textbf{1.8} \\
\bottomrule
\end{tabular}%
}
\vspace{-0.3cm}
\end{table}

\section{Conclusion}

In this work, we introduced projection-based regularisation - a theoretically grounded approach to learned equivariance which directly penalises model weights and regularises over the entire group instead of only point-wise, per-sample regularisation. For operators for which no closed-form solution of the projection can be computed efficiently in the spatial domain, we provide a general framework for computing the projection efficiently in Fourier space by masking. The experiments demonstrate that across synthetic and real-world experiments, covering both finite and continuous symmetry groups, the proposed approach improves both task performance and runtime.

\paragraph{Limitations and future work.}
\textcolor{blue}{The presented approach is ``architecture-agnostic'', in the sense that the general recipe of using the projection as a regulariser toward equivariance can be applied to any model. As show in Section~\ref{sec:equiv_projection_regularisation}, deriving this projection in closed form is straightforward for linear operators. In future work, we aim to study the projections of non-linearities and architecture-specific modules and use these to build architectures, which admit simple and efficient projections.}
Also, current experiments only evaluate the proposed method for relatively simple groups. In future work, we plan to extend this approach to more complex groups, for example with applications in material sciences.

%

\subsection{Impact Statement}

This paper presents work aimed at advancing the field of learning under symmetries. There are many potential societal consequences of our work, none of which we feel must be specifically highlighted here.

\bibliography{approx_equiv}
\bibliographystyle{icml2026}

\newpage
\appendix
\onecolumn

\section{Projection in the Invariant Case}\label{app:invariance_projection}

In this section, we show that an invariant function $f\in L^2(G)$ only has trivial non-zero Fourier coefficients. 

\begin{lemma}\label{lemma:inv_fourier}
    Let $f\in L^2(G)$ be left invariant with respect to the regular representation $\tau$, i.e. $f(hg)=f(g)$ for all $h,g\in G$. Then $\widehat f(\pi)$ is non-zero if and only if $\pi$ is the trivial representation $\mathbf1: g \mapsto I_{\C}$.
\end{lemma}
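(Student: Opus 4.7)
The plan is to show that the stated left-invariance condition is in fact very restrictive: it forces $f$ to be constant. The computation of the Fourier transform of a constant function then reduces the claim to a well-known orthogonality statement about $\int_G \pi(g)\,d\lambda(g)$.

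First, I would unpack the invariance hypothesis. By definition of the regular representation, $\tau(h)f = f$ means $(\tau(h)f)(x) = f(h^{-1}x) = f(x)$ for all $h,x \in G$. Setting $x = e$ and renaming, this gives $f(y) = f(e)$ for all $y \in G$, so $f$ is the constant function $f \equiv c$ with $c = f(e)$.

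Second, I would substitute this into the Fourier transform:
\begin{equation}
    \widehat{f}(\pi) \;=\; \int_G f(g)\,\pi(g)^{*}\,d\lambda(g) \;=\; c \int_G \pi(g)^{*}\,d\lambda(g).
\end{equation}
The key lemma from abstract harmonic analysis is that $A_\pi := \int_G \pi(g)\,d\lambda(g)$ vanishes for any non-trivial irreducible $\pi$, and equals $I_{V_\pi}$ when $\pi = \mathbf 1$. The standard argument I would reproduce is: by left-invariance of Haar measure, $\pi(h) A_\pi = \int_G \pi(hg)\,d\lambda(g) = A_\pi$ for every $h \in G$, so the image of $A_\pi$ lies in $V_\pi^G$, the subspace fixed by all $\pi(h)$. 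When $\pi$ is irreducible and non-trivial, $V_\pi^G = \{0\}$ (otherwise it would be a non-zero invariant subspace), hence $A_\pi = 0$. Taking adjoints yields $\int_G \pi(g)^{*}\,d\lambda(g) = 0$ as well. For $\pi = \mathbf 1$, the integrand is simply $I_{\mathbb{C}}$ and the integral evaluates to $I_{\mathbb{C}} = 1$.

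Combining the two steps gives $\widehat{f}(\pi) = 0$ for every non-trivial irreducible $\pi$ and $\widehat{f}(\mathbf 1) = c$, which establishes the ``only if'' direction; the ``if'' direction amounts to observing that a non-zero $c$ indeed makes $\widehat{f}(\mathbf 1)$ non-zero. I do not expect a real obstacle here: the reduction to constancy of $f$ is immediate once one writes out the regular representation correctly, and the vanishing of $A_\pi$ for non-trivial $\pi$ is a textbook fact; the only point requiring mild care is the direction of the action (left vs.\ right) and the complex conjugation in the definition of $\widehat f$, which I would verify explicitly when citing the Schur-style averaging argument.
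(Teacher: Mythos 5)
Your proof is correct, but it takes a genuinely different route from the paper's. You first observe that the hypothesis $f(hg)=f(g)$ for all $h,g$ forces $f$ to be the constant $c=f(e)$, reducing the lemma to the single computation $\widehat f(\pi)=c\int_G\pi(g)^{*}\,d\lambda(g)$, and you kill this integral for non-trivial irreducible $\pi$ by the fixed-subspace argument: left-invariance of Haar gives $\pi(h)A_\pi=A_\pi$, so $\operatorname{im}A_\pi\subseteq V_\pi^{G}=\{0\}$ by irreducibility. The paper instead never uses constancy of $f$; it computes the Fourier coefficients of the averaged function $f_{\mathrm{inv}}(g)=\int_G f(hg)\,d\lambda(h)$ for an \emph{arbitrary} $f$, identifies $A_\pi=\int_G\pi(h)^{*}\,d\lambda(h)$ as a scalar via Schur's lemma, and evaluates that scalar by taking traces and invoking character orthogonality, arriving at $\widehat{f_{\mathrm{inv}}}(\pi)=\tfrac{1}{d_\pi}\widehat f(\pi)\,\delta_{\pi,\mathbf 1}$. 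Your argument is shorter and more elementary (no characters, no trace computation), while the paper's computation is really a proof of the stronger projection formula of Corollary~\ref{cor:projection_invariant} in disguise, which is what the rest of the section actually needs. One small point you already flag correctly: the ``if'' direction requires $c\neq 0$, i.e.\ $f\neq 0$; this caveat is implicit in the paper's statement as well, so it is a shared edge case rather than a gap in your argument.
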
 
\begin{proof}
    See Appendix \ref{app:proof_lemma_inv}.
\end{proof}

\begin{corollary}\label{cor:projection_invariant}
    Let $f\in L^2(G)$ be any function on $G$ and set $P_\textrm{inv}$ to be the invariant projection. Then $\widehat{P_\text{inv}(f)}(\pi) = \widehat f(\pi) \delta_{\pi, \mathbf1}$.
\end{corollary}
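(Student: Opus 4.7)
The plan is to combine Lemma \ref{lemma:inv_fourier} with an explicit evaluation of $P_\textrm{inv}(f)$. First, I would write the invariant projection as the Reynolds operator associated with the regular representation $\tau$, so that
$$P_\textrm{inv}(f)(x) \;=\; \int_G f(g^{-1}x)\, d\lambda(g).$$
By left-invariance of the Haar measure, substituting $h = g^{-1}x$ shows that this integral is independent of $x$, so $P_\textrm{inv}(f)$ is the constant function equal to $\int_G f(h)\, d\lambda(h) = \widehat{f}(\mathbf{1})$.

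Given this, I would complete the proof in two cases. For $\pi \neq \mathbf{1}$, since $P_\textrm{inv}(f)$ is constant, it is in particular left-invariant in the sense of Lemma \ref{lemma:inv_fourier}, and that lemma directly yields $\widehat{P_\textrm{inv}(f)}(\pi) = 0 = \widehat{f}(\pi)\,\delta_{\pi,\mathbf{1}}$. For $\pi = \mathbf{1}$, a direct computation from the definition of the Fourier transform gives
$$\widehat{P_\textrm{inv}(f)}(\mathbf{1}) \;=\; \int_G P_\textrm{inv}(f)(x)\, d\lambda(x) \;=\; \widehat{f}(\mathbf{1}),$$
using that $P_\textrm{inv}(f)$ is the constant $\widehat{f}(\mathbf{1})$ and that $\lambda$ is normalised. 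Combining the two cases gives the desired identity $\widehat{P_\textrm{inv}(f)}(\pi) = \widehat{f}(\pi)\,\delta_{\pi,\mathbf{1}}$.

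There is no serious obstacle in this argument. The main subtlety worth flagging is the identification of the $\tau$-invariant subspace of $L^2(G)$ with the constants, which follows from transitivity of left translation on $G$; the rest reduces to bi-invariance of $\lambda$ and an application of Lemma \ref{lemma:inv_fourier}. An alternative one-shot derivation, which I would mention as a sanity check, is to substitute the formula for $P_\textrm{inv}(f)$ directly into $\widehat{P_\textrm{inv}(f)}(\pi) = \int_G P_\textrm{inv}(f)(x)\,\pi(x)^{*}\, d\lambda(x)$, swap integrals via Fubini, perform the change of variables $y = g^{-1}x$, and recognise the residual $\int_G \pi(g)^{*}\, d\lambda(g)$ as the Reynolds projection onto the $G$-invariants of $V_\pi$, which vanishes for every non-trivial irreducible $\pi$ and equals the identity for $\pi = \mathbf{1}$ by Schur's lemma.
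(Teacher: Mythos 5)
Your proposal is correct and follows essentially the same route as the paper's proof: apply Lemma~\ref{lemma:inv_fourier} to the invariant function $P_\textrm{inv}(f)$ to kill all non-trivial frequencies, then handle $\pi=\mathbf{1}$ by noting $P_\textrm{inv}(f)$ is a constant and computing both Fourier coefficients directly. Your version is in fact slightly more careful than the paper's, since you justify via the change of variables that the constant equals $\int_G f\,d\lambda=\widehat f(\mathbf 1)$, a step the paper asserts without comment.
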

\begin{proof}
    See Appendix \ref{app:proof_corollary_projection_invariant}
\end{proof}

In Figure \ref{fig:commutative_diagrams_invariance}, we schematically depict how we can exploit the simple structure of the projection in the spectral domain $\widehat{P_\text{inv}}$ to efficiently calculate the smoothing operator $P_\text{inv}$.

\begin{figure}[t]
  \centering
    \begin{adjustbox}{max width=\linewidth}
      \begin{tikzcd}[
        row sep=3.8em,
        column sep=6em,
        cells={nodes={font=\small}}
      ]
        f \in L^2(G)
          \arrow[r, "P_\text{inv}"]
          \arrow[d, shift right=1.0ex, "FT"']
        & f_{\text{inv}} \in L^2(G)
          \arrow[d, shift right=1.0ex, "FT"'] \\
        \widehat{f}: \widehat G \to \bigcup_\pi U(V_\pi)
          \arrow[u, shift right=1.0ex, "IFT"']
          \arrow[r, "\widehat{P_\text{inv}}=\delta_{\pi, \mathbf{1}}"']
        & \widehat{f}_{\text{inv}} = \widehat{f_{\text{inv}}}:\widehat G \to \bigcup_\pi U(V_\pi)
          \arrow[u, shift right=1.0ex, "IFT"']
        \arrow[ul, phantom, "\scalebox{1.2}{$\circlearrowleft$}" description]
      \end{tikzcd}
    \end{adjustbox}
    \caption{A commutative diagrams showing how to apply the projection operator in Fourier space. fro invariance, we keep only the trivial representation and discards all other frequencies.}
    \label{fig:commutative_diagrams_invariance}
\end{figure}

\section{Proofs in Section \ref{sec:equiv_projection_regularisation}}

\subsection{Proof of Lemma \ref{lem:proj-defect}}\label{app:proof_lemma_proj_defect}

\begin{proof}[Proof of Lemma \ref{lem:proj-defect}]
    By definition of $P$,
    \begin{equation}
        T - P(T) 
        = \int_G \pi_{\mathrm{out}}(g)^{*}\big(\pi_{\mathrm{out}}(g) \circ T - T\circ \pi_{\mathrm{in}}(g)\big)\, d\lambda(g)
        = \int_G \pi_{\mathrm{out}}(g)^{*}\,\Delta_g(T)\, d\lambda(g).
    \end{equation}
    Pre-/post-composition with the unitaries $\pi_{\mathrm{out}}(g)^{*}$ preserves the Hilert-Schmidt norm, and the norm of an average is at most the average of the norm. Hence
    \begin{equation}
        \|T - P(T)\|_\mu
        \;\le\; \int_G \|\Delta_g(T)\|_\mu\, d\lambda(g)
        \;\le\; \sup_{g\in G}\|\Delta_g(T)\|_\mu
        \;=\; \E(T),
    \end{equation}
    giving the lower bound. For the upper bound, note that $P(T)$ is $G$-equivariant, and therefore,
    \begin{equation}
        \Delta_g(T)
        = \pi_{\mathrm{out}}(g)\big(T - P(T)\big)
          - \big(T - P(T)\big)\pi_{\mathrm{in}}(g).
    \end{equation}
    Taking norms and using that $\pi_{\mathrm{in/out}}(g)$ are unitaries,
    \begin{equation}
        \|\Delta_g(T)\|_\mu \;\le\; \|T - P(T)\|_\mu + \|T - P(T)\|_\mu \;=\; 2\,\|T - P(T)\|_\mu.
    \end{equation}
    Finally, take the supremum over $g\in G$ to obtain $\E(T)\le 2\|T - P(T)\|_\mu$.
\end{proof}

\subsection{Proof of Lemma \ref{lem:defect_composition}}\label{app:proof_lemma_defect_composition}

\begin{proof}[Proof of Lemma \ref{lem:defect_composition}]
    For any composable maps $A,B$, the equivariance defect satisfies the chain rule
    \begin{equation}
        \Delta_g(A\circ B) \;=\; (\Delta_g A)\circ B \;+\; A\circ (\Delta_g B).
    \end{equation}
    Applying this repeatedly to $f_k\circ\cdots\circ f_1$ yields the telescoping identity
    \begin{equation}
        \Delta_g(T)
        = \sum_{i=1}^k \big(f_k\circ\cdots\circ f_{i+1}\big)\circ \Delta_g(f_i)\circ \big(f_{i-1}\circ\cdots\circ f_1\big).
    \end{equation}
    Taking norms and using $\|f_{i+1}\circ \Delta_g(f_i)\|_\mu = \|f_{i+1}\circ (\pi_{out}(g) \circ f_i - f_{i} \circ \pi_{in}(g))\|_\mu\le \mathrm{Lip}(f_{i+1})\,\|\Delta_g(f_i)\|_\mu$ iteratively together with
    $\mathrm{Lip}(f_j)=L_j$ to obtain
    \begin{equation}
        \|\Delta_g(T)\|_\mu
        \;\le\; \sum_{i=1}^k \Bigg(\prod_{m=i+1}^{k} L_m\Bigg)\, \|\Delta_g(f_i)\|_\mu\, \Bigg(\prod_{m=1}^{i-1} L_m\Bigg).
    \end{equation}
    Finally, take $\sup_{g\in G}$ on both sides and note that $\E(T)=\sup_g\|\Delta_g(T)\|$ and
    $\E(f_i)=\sup_g\|\Delta_g(f_i)\|$ to obtain the stated bound.
\end{proof}

\subsection{Proof of Corollary \ref{cor:bound_neural_net}}\label{app:proof_cor_bound_neural_net}

\begin{proof}[Proof of Corollary \ref{cor:bound_neural_net}]
    \textcolor{blue}{To fit into the framework of Lemma \ref{lem:defect_composition}, we choose}
    \[
        f_{2k-1} \coloneqq W^{(k)}, \qquad
        f_{2k} \coloneqq \sigma_k,
    \]
    so that 
    \[
        \quad L_{2k-1} \coloneqq \|W^{(k)}\|_F,\quad L_{2k} \coloneqq \textrm{Lip}(\sigma_k),
    \]
    for $k = 1,\dots,2S-1$. By construction $E(\sigma_k) = 0$ for all $k$, hence $E(f_{2k}) = 0$. Plugging this into Equation~\ref{eqn:decomposition} and noting that the even indices do not contribute, we obtain
    \begin{equation}
    \label{eq:sum-over-layers}
    E(T)
    \;\le\;
    \sum_{k=1}^S
    \Bigg( \prod_{m \neq 2k-1} L_m \Bigg) E\big(W^{(k)}\big).
    \end{equation}
    Now note that the product over $m\neq 2k-1$ contains
    \begin{itemize}
        \item all activation Lipschitz constants $L_{2j} = \textrm{Lip}(\sigma_j)$, $j=1,\dots,2S-1$,
        \item all weight norms $L_{2r-1} = \|W^{(r)}\|$ with $r\neq k$.
    \end{itemize}
    Thus
    \[
    \prod_{m \neq 2k-1} L_m
    =
    \Bigg( \prod_{j=1}^{S-1} \textrm{Lip}(\sigma_j) \Bigg)
    \Bigg( \prod_{\substack{r=1 \\ r\neq k}}^S \|W^{(r)}\| \Bigg),
    \]
    and Equation~\ref{eq:sum-over-layers} becomes
    \begin{equation}
    \label{eq:bound-with-EW}
    E(T)
    \;\le\;
    \Bigg( \prod_{j=1}^{S-1} \textrm{Lip}(\sigma_j) \Bigg)
    \sum_{k=1}^S
    \Bigg( \prod_{\substack{r=1 \\ r\neq k}}^S \|W^{(r)}\| \Bigg)
    E\big(W^{(k)}\big).
    \end{equation}
    
    Next use Lemma~3.2, which states that for each linear layer
    \[
    E\big(W^{(k)}\big)
    \;\le\;
    2 \,\big\| W^{(k)} - P\big(W^{(k)}\big) \big\|_F.
    \]
    Substituting this into Equation~\ref{eq:bound-with-EW} yields
    \begin{equation}
    \label{eq:explicit-C-bound}
    E(T)
    \;\le\;
    2 \Bigg( \prod_{j=1}^{S-1} \textrm{Lip}(\sigma_j) \Bigg)
    \sum_{k=1}^S
    \Bigg( \prod_{\substack{r=1 \\ r\neq k}}^S \|W^{(r)}\|_F \Bigg)
    \big\| W^{(k)} - P\big(W^{(k)}\big) \big\|_F.
    \end{equation}
    
    Define
    \begin{equation}
    \label{eq:def-C}
    C
    :=
    2 \Bigg( \prod_{j=1}^{S-1} \textrm{Lip}(\sigma_j) \Bigg)
    \max_{1 \le k \le S}
    \prod_{\substack{r=1 \\ r\neq k}}^S \|W^{(r)}\|_F.
    \end{equation}
    Then, for every $k$,
    \[
    2 \Bigg( \prod_{j=1}^{S-1} \textrm{Lip}(\sigma_j) \Bigg)
    \prod_{\substack{r=1 \\ r\neq k}}^S \|W^{(r)}\|
    \;\le\;
    C,
    \]
    and Equation~\ref{eq:explicit-C-bound} implies
    \[
    E(T)
    \;\le\;
    C \sum_{k=1}^S \big\| W^{(k)} - P\big(W^{(k)}\big) \big\|_F.
    \]
    
    This is exactly Eq.~(9), with the dependence of $C$ on the norms $\|W^{(k)}\|$ and Lipschitz constants $\textrm{Lip}(\sigma_j)$ made explicit in Equation~\ref{eq:def-C}.
\end{proof}

\subsection{Proof of Lemma \ref{lemma:inv_fourier}}\label{app:proof_lemma_inv}
\begin{proof}[Proof of Lemma \ref{lemma:inv_fourier}]
    We define the invariance operator of a function $f \in L^2(G)$ as 
    \begin{align}
        f_\text{inv}(g) = \int_G f(hg) \, d\lambda(h)
    \end{align}
    The Fourier coefficients of this are
    \begin{align}
        \widehat{f_\text{inv}}(\pi) 
        &= \int_G f_\text{inv}(g) \, \pi(g)^* \, d\lambda(g) \\
    &= \int_G \left( \int_G f(hg) \, d\lambda(h) \right) \pi(g)^* \, d\lambda(g) \\
    &= \int_G f(x) \left( \int_G \pi(h^{-1}x)^* \, d\lambda(h) \right) d\lambda(x) \quad \text{substituting $x=hg$ $\implies g = h^{-1} x$}\\
    &= \int_G f(x) \left( \int_G \pi(h^{-1})^* \, d\lambda(h) \right) \pi(x)^* \, d\lambda(x) \\
    &= \int_G f(x) \left( \int_G \pi(h)^* \, d\lambda(h) \right) \pi(x)^* \, d\lambda(x). \quad \text{invariance of Haar measure}
    \end{align}
    Define $A_\pi := \int_G \pi(h)^* \, d\lambda(h) \in \text{End}(V_\pi)$. Note that $A_\pi$ is $\pi$-equivariant; indeed, for all $g \in G$,
    \begin{align}
        \pi(g)\,A_\pi 
        &=\int_G\pi(g)\,\pi(h)^*\,d\lambda(h) \\
        &=\int_G\pi(gh^{-1})\,d\lambda(h)\\
        &=\int_G\pi(k)^*\pi(g)\,d\lambda(k) \quad \quad \text{substituting $k=ghg^{-1}$ $\implies gh^{-1} = k^{-1} g$}\\
        &= A_\pi \,  \pi(g),
    \end{align}
    Hence by Schur's lemma (since $\pi$ is irreducible), we have 
    \[
    A_\pi\in\mathrm{End}G(V_\pi)
    \;\cong\;
    \{\,\lambda I: \lambda\in\mathbb{C}\}.
    \]
    So $A_\pi = \lambda I$ for some $\lambda \in \mathbb{C}$.
    
    Now,
    \begin{align}
        \mathrm{tr}\,A_\pi
    =\int_G \mathrm{tr}\bigl(\pi(h)^*\bigr)\,d\lambda(h)
    =\int_G \overline{\chi_\pi(h)}\,d\lambda(h)
    \;=\;\overline{\int_G\chi_\pi(h)\,d\lambda(h)}.
    \end{align}
    But the characters $\chi_\pi$ are orthonormal, so denoting the trivial representation $g \mapsto 1$ by $\textbf{1}$, i.e. have $\chi_\textbf{1}(g) = 1$ for all $g$, we have 
    \begin{align}
        \int_G \chi_\pi(g)\,d\lambda(g)
    =\int_G \chi_\pi(g)\,\overline{\chi_{\mathbf1}(g)}\,d\lambda(g)
    =\langle \chi_\pi(g), \chi_{\mathbf1}(g) \rangle_{L^2(G)}
    =\delta_{\pi,\mathbf1}.
    \end{align}
    Finally, this gives
    \begin{align}
        d_\pi \lambda \;=\; \mathrm{tr}\,A_\pi \;=\; \implies \lambda \;=\; \frac{\delta_{\pi,\mathbf1}}{d_\pi}
        =\begin{cases}0,&\pi\neq\mathbf1,\\\frac{1}{d_\pi},&\pi=\mathbf1.\end{cases}
    \end{align}
    Substituting this into the above yields
    \begin{align}
         \widehat{f_\text{inv}}(\pi) = \frac{1}{d_\pi}  \widehat{f}(\pi)  \delta_{\pi,\mathbf1}.
    \end{align}
\end{proof}

\subsection{Proof of Corollary \ref{cor:projection_invariant}}\label{app:proof_corollary_projection_invariant}

\begin{proof}[Proof of Corollary \ref{cor:projection_invariant}]
    Since $P_\text{inv}$ is a projection onto the $G$-invariant subspace, $P_\text{inv}(f)$ is always invariant. Hence, by Lemma \ref{lemma:inv_fourier}, $\widehat{P_\text{inv}(f)}(\pi)$ is zero for all $\pi \neq \mathbf 1$. Now note that by invariance, $P_\text{inv}(f)(g) = c$ for all $g \in G$ for some $c \in \C$. We then calculate 
    \begin{equation}
        \widehat{P_\text{inv}(f)}(\mathbf 1) 
        = \int_G P_\text{inv}(f)(g) \; \mathbf{1}(g)^* \: d\lambda(g) 
        = \int_G P_\text{inv}(f)(g) \; d\lambda(g) 
        = \int_G c \; d\lambda(g) 
        = c.
    \end{equation}
    At the same time
    \begin{equation}
        \widehat f(\mathbf 1) 
        = \int_G \; f(g) \; \mathbf{1}(g)^* d\lambda(g)
        = \int_G \; f(g)  d\lambda(g)
        = c.
    \end{equation}
    which concludes the proof.
\end{proof}

\subsection{Proof of Theorem \ref{thm:equiv_fourier}}\label{app:proof_thm_equiv}

\begin{proof}[Proof of Theorem \ref{thm:equiv_fourier}]
    Recall that by Peter--Weyl we have an isometric $G$-equivariant decomposition
    \[
    L^2(G)\;\cong\;\bigoplus_{\pi\in\widehat G} V_\pi\otimes V_\pi^*,
    \]
    where the left regular representation acts as $\tau(g)\cong\bigoplus_{\pi} \pi(g)\otimes I_{V_\pi^*}$.
    Let $T:L^2(G)\to L^2(G)$ be linear and $\tau$-equivariant, i.e.\ $\tau(g)\circ T=T\circ \tau(g)$ for all $g\in G$.
    Write the block-matrix of $T$ in this decomposition as $\widehat T=\{T_{\pi\sigma}\}_{\pi,\sigma\in\widehat G}$ with
    \[
    T_{\pi\sigma}\;:\; V_\sigma\otimes V_\sigma^*\;\longrightarrow\; V_\pi\otimes V_\pi^*.
    \]
    Equivariance implies, for all $g\in G$,
    \begin{equation}\label{eq:block_intertwiner}
    (\pi(g)\otimes I)\,T_{\pi\sigma}\;=\;T_{\pi\sigma}\,(\sigma(g)\otimes I).
    \end{equation}
    Thus $T_{\pi\sigma}$ is an intertwiner from $\sigma$ to $\pi$ (acting on the first tensor factor). By Schur's lemma,
    $T_{\pi\sigma}=0$ whenever $\pi\not\simeq\sigma$. Hence $\widehat T$ is block-diagonal:
    \[
    \widehat T\;\cong\;\bigoplus_{\pi\in\widehat G} T_{\pi\pi},
    \qquad
    T_{\pi\pi}\in\End(V_\pi\otimes V_\pi^*),
    \]
    and Equation~\ref{eq:block_intertwiner} reduces to
    \begin{equation}\label{eq:commutant_condition}
    (\pi(g)\otimes I)\,T_{\pi\pi}\;=\;T_{\pi\pi}\,(\pi(g)\otimes I)\qquad \forall g\in G.
    \end{equation}
    
    We now identify all endomorphisms satisfying Equation~\ref{eq:commutant_condition}.
    Consider the canonical vector space isomorphism
    \[
    \Phi:\;V_\pi\otimes V_\pi^*\xrightarrow{\;\cong\;}\End(V_\pi),
    \qquad
    \Phi(u\otimes\varphi)(v):=\varphi(v)\,u.
    \]
    A direct check shows that under $\Phi$, the action $\pi(g)\otimes I$ corresponds to left multiplication on $\End(V_\pi)$:
    \[
    \Phi\big((\pi(g)\otimes I)(w)\big)\;=\;\pi(g)\,\Phi(w)\qquad \forall w\in V_\pi\otimes V_\pi^*.
    \]
    Define $\widetilde T_\pi:=\Phi\circ T_{\pi\pi}\circ\Phi^{-1}\in\End(\End(V_\pi))$. Then Equation~\ref{eq:commutant_condition}
    is equivalent to
    \begin{equation}\label{eq:left_equiv_End}
    \widetilde T_\pi(\pi(g)X)\;=\;\pi(g)\widetilde T_\pi(X)\qquad \forall g\in G,\ \forall X\in\End(V_\pi).
    \end{equation}
    Let $B:=\widetilde T_\pi(I_{V_\pi})\in\End(V_\pi)$. For any $g\in G$, applying Equation~\ref{eq:left_equiv_End} to $X=I$ gives
    $\widetilde T_\pi(\pi(g))=\pi(g)B$. By linearity, the same identity holds for all $X$ in the linear span of $\{\pi(g):g\in G\}$.
    
    Since $\pi$ is irreducible, the span of $\{\pi(g)\}$ is all of $\End(V_\pi)$ (Burnside's theorem). Therefore,
    for every $X\in\End(V_\pi)$ we have
    \[
    \widetilde T_\pi(X)\;=\;X\,B,
    \]
    i.e.\ $\widetilde T_\pi$ is right multiplication by $B$. Transporting back through $\Phi$ shows that
    \[
    T_{\pi\pi}\;=\;I_{V_\pi}\otimes B_\pi
    \]
    for some $B_\pi\in\End(V_\pi^*)$ (canonically corresponding to $B$ on the dual space).
    Hence
    \[
    \widehat T\;\cong\;\bigoplus_{\pi\in\widehat G}\big(I_{V_\pi}\otimes B_\pi\big),
    \]
    which proves the claimed block structure.
\end{proof}

\section{Details on Vector-valued Signals}\label{app:vector-valued_signals}

\paragraph{Fourier description.}
Peter–Weyl yields the unitary decomposition
\[
  L^2(G) \;\cong\; \bigoplus_{\pi\in\widehat G} V_\pi\otimes V_\pi^{*},
  \qquad
  L^2(G,V) \;\cong\; \bigoplus_{\pi\in\widehat G} V_\pi\otimes\big(V_\pi^{*}\otimes V\big),
\]
where $G$ acts by $\pi$ on the first tensor factor and trivially on $V_\pi^*$, while the fiber transforms by $\rho$.
Accordingly, any bounded linear map
\(
  T:L^2(G,V_{\mathrm{in}})\!\to\!L^2(G,V_{\mathrm{out}})
\)
admits a block form
\[
  \widehat T \;\cong\; \Big(\widehat T(\pi\!,\sigma)\Big)_{\pi,\sigma\in\widehat G},
  \qquad
  \widehat T(\pi\!,\sigma):
  V_\sigma\!\otimes\!\big(V_\sigma^{*}\!\otimes\!V_{\mathrm{in}}\big)
  \longrightarrow
  V_\pi\!\otimes\!\big(V_\pi^{*}\!\otimes\!V_{\mathrm{out}}\big).
\]
Averaging annihilates all off-diagonal $(\pi\neq\sigma)$ blocks and, on each frequency $\pi$, orthogonally projects $\widehat T(\pi,\pi)$ onto the intertwiner space $\mathrm{Hom}_G\!\big(\,\pi^{*}\!\otimes\!\rho_{\mathrm{in}}\,,\,\pi^{*}\!\otimes\!\rho_{\mathrm{out}}\,\big).$

\begin{theorem}[Theorem \ref{thm:vector_block} restated]
    Let $T:L^2(G,V_{\mathrm{in}})\!\to\!L^2(G,V_{\mathrm{out}})$ be linear. Then
    \begin{align}
      \widehat{P_{\mathrm{equiv}}(T)}
      &\;\cong\;
      \bigoplus_{\pi\in\widehat G}
      \Big(I_{V_\pi}\otimes B_\pi\Big), \\
      B_\pi 
      &\;=\;
      \int_G
      \big(\pi(g)^{*}\!\otimes\!\rho_{\mathrm{out}}(g)\big)\,
      \widehat T(\pi,\pi)\,
      \big(\pi(g)\!\otimes\!\rho_{\mathrm{in}}(g)^{-1}\big)\;d\lambda(g),
    \end{align}
    with $B_\pi\in \mathrm{Hom}_G\!\big(\pi^{*}\!\otimes\!\rho_{\mathrm{in}},\,\pi^{*}\!\otimes\!\rho_{\mathrm{out}}\big)$. In particular, every equivariant $T$ is block-diagonal across frequencies and acts as the identity on $V_\pi$ and as an intertwiner on the fiber–multiplicity space $V_\pi^{*}\!\otimes\!V$.
\end{theorem}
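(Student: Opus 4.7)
}
The plan is to transport the problem to the spectral side via Peter--Weyl and reduce it to the scalar case proved in Theorem \ref{thm:equiv_fourier}, picking up an extra factor $V$ that transforms only through $\rho$. First, I would invoke the Peter--Weyl decomposition
\[
  L^2(G,V)\;\cong\;\bigoplus_{\pi\in\widehat G} V_\pi\otimes\bigl(V_\pi^{*}\otimes V\bigr),
\]
and observe that under this isomorphism the action $\tau\!\otimes\!\rho$ becomes $\bigoplus_\pi \pi(g)\otimes I_{V_\pi^{*}}\otimes \rho(g)$: the first tensor leg carries the irreducible $\pi$, while the ``multiplicity'' leg $V_\pi^{*}\otimes V$ is acted on only through $\rho$. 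Writing $T$ as a block matrix $(\widehat T(\pi,\sigma))_{\pi,\sigma}$, I would then express $P_{\mathrm{equiv}}(T)$ blockwise as
\[
  \widehat{P_{\mathrm{equiv}}(T)}(\pi,\sigma)
  \;=\;\int_G \bigl(\pi(g)^{*}\!\otimes\!\rho_{\mathrm{out}}(g)^{*}\bigr)\,\widehat T(\pi,\sigma)\,\bigl(\sigma(g)\!\otimes\!\rho_{\mathrm{in}}(g)\bigr)\,d\lambda(g),
\]
which reduces the proof to analysing this integral for fixed $(\pi,\sigma)$.

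Next, I would dispose of the off-diagonal case $\pi\neq\sigma$ by Schur's lemma. Factoring the integrand on the ``irrep leg,'' the averaging produces an intertwiner from $\sigma$ to $\pi$; since these are inequivalent irreducibles, such an intertwiner must vanish. Concretely, this is the same character-orthogonality argument used in the scalar Theorem \ref{thm:equiv_fourier}, now tensored with the identity on the multiplicity factors. Thus $\widehat{P_{\mathrm{equiv}}(T)}$ is block-diagonal across frequencies.

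For the diagonal block $\pi=\sigma$, I would apply Schur's lemma on the $V_\pi$ leg to conclude that the averaged operator must factor as $I_{V_\pi}\otimes B_\pi$ for some $B_\pi\in\mathrm{End}(V_\pi^{*}\otimes V_{\mathrm{out}},\,V_\pi^{*}\otimes V_{\mathrm{in}})$; any $\pi(g)^{*}(\cdot)\pi(g)$-equivariant endomorphism of $V_\pi$ is a scalar, and this scalar can be absorbed into $B_\pi$. Tracing out (or contracting with any choice of orthonormal basis on) the irrep leg then identifies $B_\pi$ with the displayed formula
\[
  B_\pi \;=\; \int_G \bigl(\pi(g)^{*}\!\otimes\!\rho_{\mathrm{out}}(g)\bigr)\,\widehat T(\pi,\pi)\,\bigl(\pi(g)\!\otimes\!\rho_{\mathrm{in}}(g)^{-1}\bigr)\,d\lambda(g),
\]
where the unitarity $\rho_{\mathrm{out}}(g)^{*}=\rho_{\mathrm{out}}(g)^{-1}$ is used to bring the expression into the stated form. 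The same averaging identity shows that $B_\pi$ is a fixed point of the Reynolds operator for the representation $\pi^{*}\!\otimes\!\rho$ acting on the multiplicity space, hence lies in $\mathrm{Hom}_G(\pi^{*}\!\otimes\!\rho_{\mathrm{in}},\,\pi^{*}\!\otimes\!\rho_{\mathrm{out}})$.

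The main obstacle I anticipate is bookkeeping: tracking how the right regular action, the dual representation $\pi^{*}$, and the fiber representation $\rho$ combine on the multiplicity space $V_\pi^{*}\otimes V$, so that the scalar extracted by Schur on the $V_\pi$ leg is correctly absorbed and the conjugation appearing in $B_\pi$ is by $\pi^{*}\!\otimes\!\rho$ rather than by some transposed or inverse variant. Once this is pinned down, the statement follows by combining the scalar Peter--Weyl argument of Theorem \ref{thm:equiv_fourier} with the standard Reynolds projection on the multiplicity fibre.
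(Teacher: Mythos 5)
Your plan follows essentially the same route as the paper's (sketched) proof: Peter--Weyl block decomposition of $\widehat T$, Schur orthogonality to annihilate the off-diagonal blocks $\pi\neq\sigma$, and Schur's lemma on the $V_\pi$ leg combined with the Reynolds average on the multiplicity space $V_\pi^{*}\otimes V$ to produce $I_{V_\pi}\otimes B_\pi$ lying in the commutant $\mathrm{Hom}_G(\pi^{*}\!\otimes\!\rho_{\mathrm{in}},\pi^{*}\!\otimes\!\rho_{\mathrm{out}})$. The bookkeeping issue you flag (whether the fibre conjugation enters as $\rho(g)$ or $\rho(g)^{*}$) is likewise left implicit in the paper, which simply asserts that the displayed formula is ``the explicit Bochner average of that projection.''
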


\begin{proof}[Sketch]
Decompose both domain and codomain via Peter–Weyl and write $\widehat T$ in blocks $\widehat T(\pi,\sigma)$. Conjugation by $(\tau\!\otimes\!\rho)$ restricts, on the $(\pi,\pi)$ block, to the representation $\pi^{*}\!\otimes\!\rho_{\mathrm{out}}$ on the codomain multiplicity and $\pi^{*}\!\otimes\!\rho_{\mathrm{in}}$ on the domain multiplicity. Averaging is the orthogonal projection onto the commutant, hence onto $\mathrm{Hom}_G(\pi^{*}\!\otimes\!\rho_{\mathrm{in}},\pi^{*}\!\otimes\!\rho_{\mathrm{out}})$, and kills $\pi\neq\sigma$ by Schur orthogonality. The displayed formula is the explicit Bochner average of that projection.
\end{proof}



\section{Implementation Details}\label{app:details}

In this section, we provide additional information on the implementation details of all of our experiments. 

\subsection{Example: Learned $SO(2)$ invariance}\label{app:details_so2}

\paragraph{Data generation.}
Using polar coordinates $(r,\theta)$, we sample the inner cloud (blue, label $+1$) by drawing $r \sim \mathrm{Unif}[0,1]$ and $\theta \sim \mathrm{Unif}[0,2\pi)$, and the outer cloud (red, label $-1$) by drawing $r \sim \mathrm{Unif}[2.3,3]$ and $\theta \sim \mathrm{Unif}\!\big[-\tfrac{\pi}{4}, \tfrac{\pi}{4}\big)$.

\paragraph{Feature map and network.}
We project inputs $(x, y)\in\mathbb{R}^2$ onto circular harmonics up to degree $M=4$ with $C=4$ radial channels as follows: viewing $(x, y)$ as a complex number $z\in\mathbb{C}$ with $r=|z|$ and $\hat z = z/r$, define radial basis functions
\[
b_n(r) \;=\; \exp\!\Big(-\frac{(r-c_n)^2}{2\sigma^2}\Big),\quad \sigma=0.5,\quad c_n\ \text{uniform in }[0,4],\ n=1,\dots,C.
\]
Form the order-$m$ harmonic features by $h^{(m)}(r,\hat z)=\big(b_n(r)\,\hat z^{\,m}\big)_{n=1}^C$ for $m=-M,\dots,M$, and concatenate across $m$ to obtain the embedding
\[
H \;\in\; \mathbb{C}^{(2M+1)\times C}.
\]
We then apply two fully connected complex linear layers
\[
L_1:\ \mathbb{C}^{(2M+1)\times C}\!\to\mathbb{C}^{(2M+1)\times C_{\text{hid}}},\qquad
L_2:\ \mathbb{C}^{(2M+1)\times C_{\text{hid}}}\!\to\mathbb{C}^{(2M+1)\times C_{\text{hid}}},
\]
followed by an $\mathrm{SO}(2)$-equivariant tensor product:
\[
h'_{m_{\text{out}}} \;=\; \sum_{m_1+m_2=m_{\text{out}}} h_{m_1}\,h_{m_2},
\]
with complex multiplication applied channel-wise. We then extract the invariant component $h'_0$ and pass its real part through a final real-valued linear head $L_{\text{final}}:\ \mathbb{R}^{C_{\text{hid}}}\!\to\mathbb{R}$ to produce the scalar logit.

We then train a new model for each combination of $\lambda_G, \lambda_\perp$ (see Figure \ref{fig:so2_vary_inv_levels}) using the Adam optimiser \cite{kingma2014adam} for $200$ epochs with a learning rate of $0.003$. We use a binary cross-entropy loss as task-specific loss.


\textcolor{blue}{\paragraph{Angular perturbation experiment (Figure~\ref{fig:wavey_rings_lambda_one_row}).}To study the interaction between the projection regulariser and violations of exact $\mathrm{SO}(2)$ symmetry, we construct a family of ``wavey'' ring datasets parameterised by an amplitude $\sigma_\perp \ge 0$. For each $\sigma_\perp$ we independently sample angles $\theta_+,\theta_- \sim \mathrm{Unif}[0,2\pi)$ and define class-conditional radii
\[
r_+(\theta_+) \;=\; r_{\mathrm{in}} + \sigma_\perp \sin(f\theta_+) + \epsilon_{\mathrm{in}},\qquad
r_-(\theta_-) \;=\; r_{\mathrm{out}} + \sigma_\perp \sin(f\theta_-) + \epsilon_{\mathrm{out}},
\]
with $(r_{\mathrm{in}},r_{\mathrm{out}}) = (1.1, 2.2)$, frequency $f=5$ and independent jitters $\epsilon_{\mathrm{in}}\sim \mathrm{Unif}[-b_{\mathrm{in}},b_{\mathrm{in}}]$, $\epsilon_{\mathrm{out}}\sim \mathrm{Unif}[-b_{\mathrm{out}},b_{\mathrm{out}}]$ for $(b_{\mathrm{in}},b_{\mathrm{out}}) = (0.15, 0.22)$. Mapping $(r_\pm,\theta_\pm)$ to Cartesian coordinates yields two noisy rings labelled $+1$ (inner) and $-1$ (outer). In Figure~\ref{fig:wavey_rings_lambda_grid}, we consider $\sigma_\perp \in \{0, 0.5, 0.75, 1.0\}$, sample $350$ points per class, and split the data into $80\%$ training and $20\%$ test. For each $(\sigma_\perp,\lambda_\perp)$ we then train (i) the approximately $SO(2)$-invariant architecture described above (blue lines), and (ii) a plain real-valued MLP on the raw coordinates (orange).}

\textcolor{blue}{We see that even for a fixed value of $\lambda_\perp$, the regulariser allows us to capture different effective levels of invariance as the data depart from rotational symmetry; see, for instance, the row with $\lambda_\perp = 1.0$, where the learned classifier remains nearly invariant for small $\sigma_\perp$ and gradually departs from invariance as the angular modulation strengthens. For strongly broken $\mathrm{SO}(2)$ symmetry (e.g. $\sigma_\perp = 1.0$), the decision boundary remains ``as radially symmetric as possible'': away from the perturbed regions the contours revert to circular rings, and in the region between the two classes, around each arm of the star-shaped pattern, the classifier exhibits consistent behaviour across angles.}

\subsubsection{Sensitivity with respect to $\lambda_G$ and $\lambda_\perp$}\label{app:sensitivity_lambdas_so2}

\textcolor{blue}{We study the sensitivity of our method to the scalar weights $\lambda_G$ and $\lambda_\perp$ through two ablation experiments. First, we repeat the experiment from Section~\ref{sec:so2_equivariance} on approximate $SO(2)$ invariance in 2D for $\lambda_G, \lambda_\perp \in \{0, 0.001, 0.01, 0.1\}$; the resulting decision boundaries are shown in Figure~\ref{fig:2d_grid_vary_equiv}. When the penalty on the orthogonal component dominates (e.g. $\lambda_\perp = 0.1$ and $\lambda_G \in \{0, 0.001, 0.01\}$), the decision boundary becomes essentially rotationally invariant. In the regime $\lambda_\perp \approx \lambda_G$, the regulariser effectively reduces to standard Tikhonov ($\ell_2$) regularisation and no longer induces a geometric inductive bias. For $\lambda_\perp < \lambda_G$, the learned level sets increasingly depend on angular information.}

\begin{figure}
    \centering
    \includegraphics[width=0.75\linewidth]{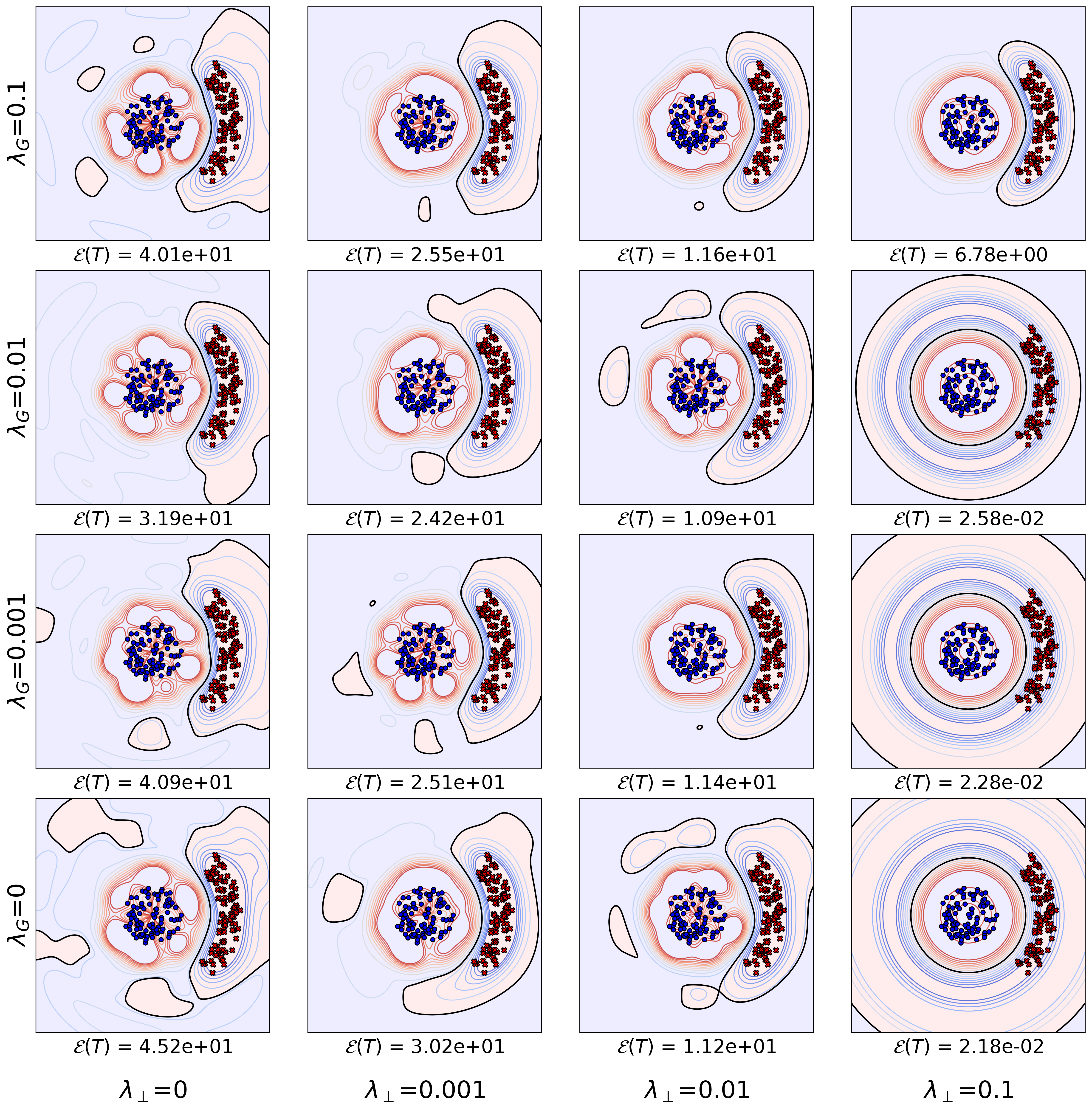}
    \caption{Controlling the degree of learned $SO(2)$} invariance by varying the values of $\lambda_G$ and $\lambda_\perp$ over the grid $\{0, 0.001, 0.01, 0.1\}$.
    \label{fig:2d_grid_vary_equiv}
\end{figure}

\begin{figure}
    \centering
    \includegraphics[width=0.75\linewidth]{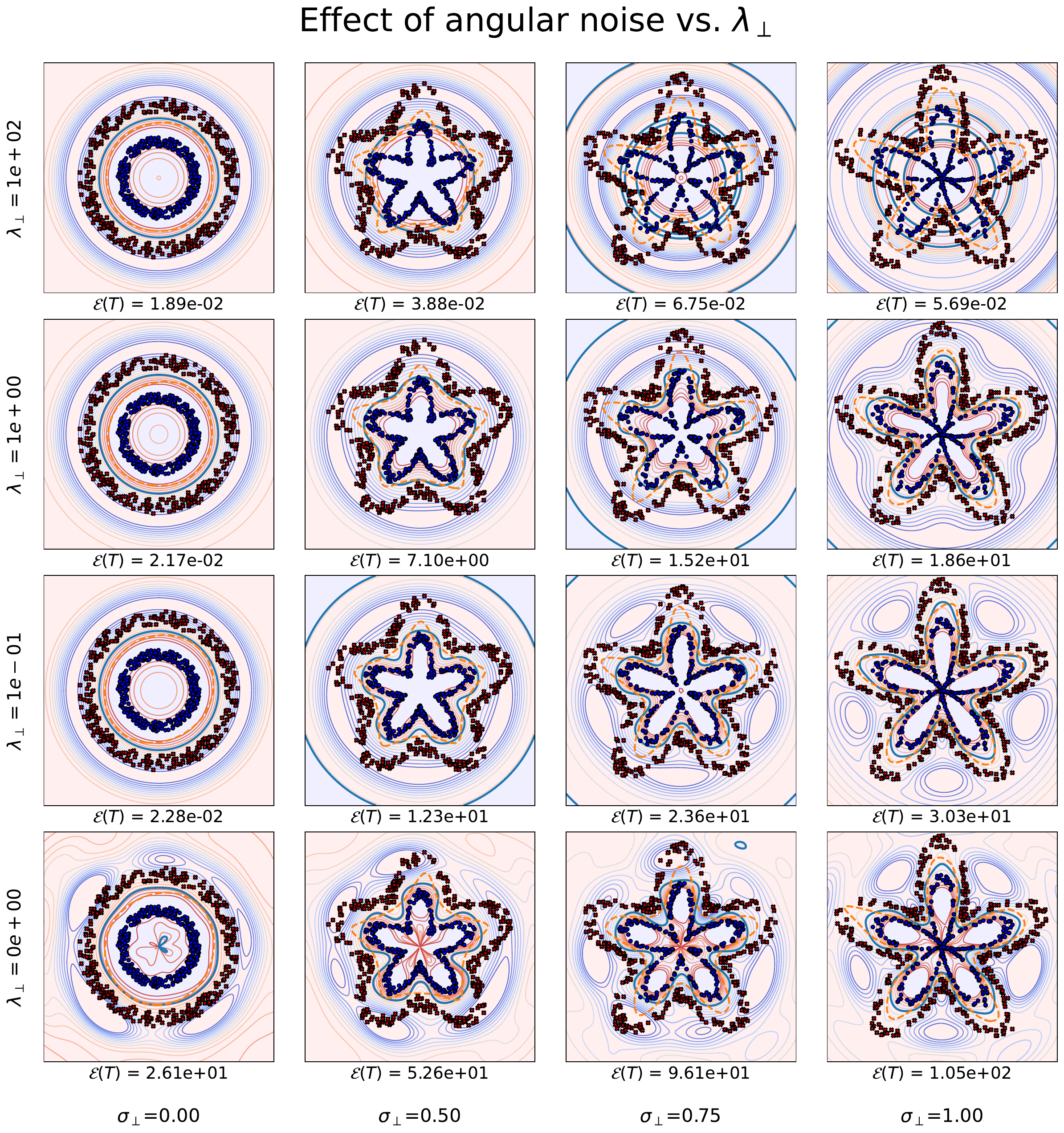}
    \caption{Effect of angular perturbations and projection strength. Columns vary the angular wave amplitude $\sigma_\perp$, rows vary the non-equivariant penalty weight $\lambda_\perp$. Blue contours show level sets of the approximately $\mathrm{SO}(2)$-invariant network and points denote training samples. Orange dashed lines are the decision boundary of a non-equivariant MLP. The value $\mathcal{E}(T)$ underneath each panel is the empirical invariance defect, demonstrating that larger $\lambda_\perp$ keeps the classifier close to invariant even as the Bayes decision boundary becomes increasingly angle-dependent.}
    \label{fig:wavey_rings_lambda_grid}
\end{figure}

\subsection{Imperfectly Symmetric Dynamical Systems}

For each baseline, relaxed group convolution (\texttt{RGroup}) and relaxed steerable CNN (\texttt{RSteer}), and for each symmetry setting, we conduct a hyperparameter sweep over learning rate, batch size, hidden width, number of layers, and the number of rollout steps used to compute prediction errors during training, using the same search ranges as \citet{Wang2022} (see Table~\ref{tab:hyperparams_smoke}). We also tune the number of filter banks for group-convolution models and the coefficient for the non-equivariance penalty $\lambda_\perp$ for relaxed weight-sharing models. The input sequence length is fixed to $10$. To ensure a fair comparison, we cap the total number of trainable parameters for every model at no more than $10^7$.

\begin{table}[h]
    \centering
    \caption{Hyperparameter tuning range for the asymetric smoke simulation data.}
    \begin{tabular}{|c|c|c|c|c|c|c|}
        \hline
        LR & Batch size & Hid-dim & Num-layers & Num-banks & \#Steps for Backprop & $\lambda_\perp$ \\
        \hline
        $10^{-2} \sim 10^{-5}$ & $8 \sim 64$ & $64 \sim 512$ & $3 \sim 6$ & $1 \sim 4$ & $3 \sim 6$ & $0, 10^{-2}, 10^{-4}, 10^{-6}$ \\
        \hline
    \end{tabular}
    \label{tab:hyperparams_smoke}
\end{table}

\subsection{CT Scan Metal Artifact Reduction}
\label{app:ct_mar}

\begin{figure}[h!]
    \centering
    \includegraphics[width=1.0\linewidth]{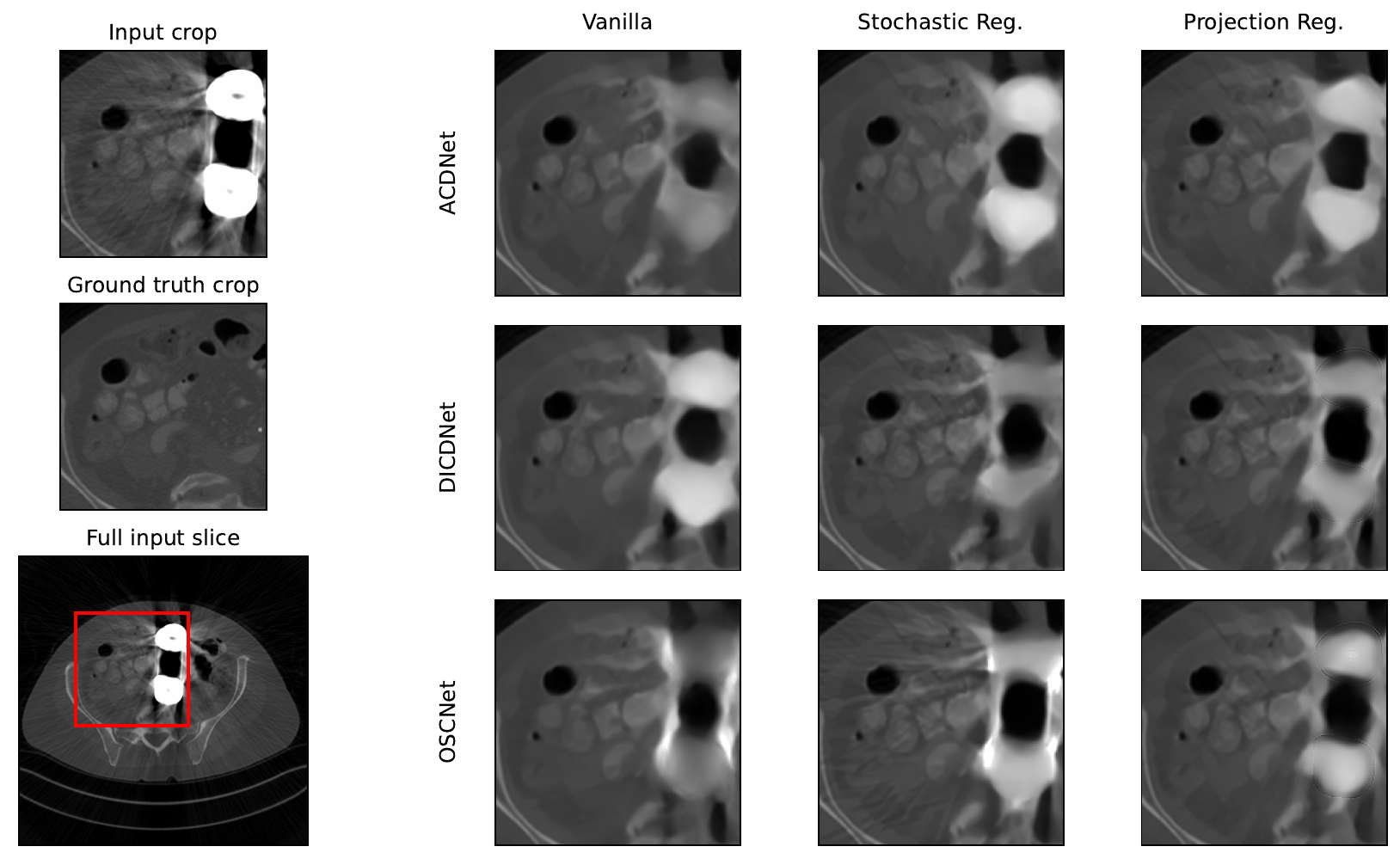}
    \caption{Qualitative comparison of the baseline methods (left column) with each of the sample-based (middle column) and our projection-based regulariser (right column) on the metal artefact reduction task. We show a cropped pelvic slice containing two metallic implants that generate artefacts.}
    \label{fig:ct_mar}
\end{figure}

\subsubsection{Hyperparameters}\label{app:details_ct_mar}

For the most part, we use the same hyperparameters as \cite{Bai_2025_CVPR}. We train for $80$ epochs with a batch size of $12$ for the baselines and our projection-based regulariser, and a batch size of $4$ for the sample-based regulariser. We set the patch size at $256\!\times\!256$. 
Optimization uses Adam \cite{kingma2014adam} ($\beta_1{=}0.5$, $\beta_2{=}0.999$) with initial learning rate $\eta_0{=}2{\times}10^{-4}$ and a MultiStepLR scheduler (milestones at epochs $\{50,100,150,200\}$, decay factor $\gamma{=}0.5$). The model hyperparameters are summarised in Table \ref{tab:hyperparams_ct_mar}.

\begin{table}[ht]
    \begin{center}
        \caption{Hyperparameters for the CT-MAR experiments.}
        \label{tab:hyperparams_ct_mar}
        \begin{tabular}{l@{\hspace{8pt}}c}
        \toprule
        \textbf{Parameter} & \textbf{Value} \\
        \midrule
        $N$ (feature maps) & $8$ \\
        $N_p$ (concat channels) & $35$ \\
        $d$ (dict. filters) & $32$ \\
        Residual blocks / ResNet & $3$ \\
        Stages $T$ & $10$ \\
        \bottomrule
        \end{tabular}
    \end{center}
\end{table}

The scalar weight for sample-based regulariser is set at $10^6$. To set ours, we performed a hyperparameter sweep over the set $\{1.0, 10^{-1}, \dots, 10^{-6}\}$ and chose $\lambda_G=1.0$.

\subsubsection{Projection onto the $C_4$-equivariant kernel subspace}
\label{app:c4_steerable_projection}

We consider steerable CNN layers whose input and output feature spaces are arranged in orientation groups of four (regular-representation channels) for the discrete rotation group
$C_4=\{0,1,2,3\}$ (multiples of $90^\circ$). Let
\[
K \in \mathbb{R}^{C'_{\mathrm{out}}\times C'_{\mathrm{in}}\times 4\times 4 \times s\times s}
\]
denote an $s \times s$ convolution kernel with output block index $p\in\{1,\dots,C'_{\mathrm{out}}\}$,
input block index $q\in\{1,\dots,C'_{\mathrm{in}}\}$, orientation indices
$\alpha,\beta\in\{0,1,2,3\}$, and spatial indices $(i,j)\in\{0,\dots,s-1\}^2$.
Let $S$ be the $4\times 4$ cyclic-shift matrix so that the channel representations of $C_4$ act by
$\rho_{\mathrm{out}}(r)=S^r$ and $\rho_{\mathrm{in}}(r)=S^r$ for $r\in\{0,1,2,3\}$.
Write $\mathrm{rot}_r$ for rotation of the spatial kernel by $90^\circ r$ (with exact
index permutation on the discrete grid).

The natural action of $C_4$ on kernels combines spatial rotation with orientation-channel
permutations:
\begin{equation}
    \label{eq:c4_kernel_action}
    \big(\mathcal{A}(r)\,K\big) \;=\; \rho_{\mathrm{out}}(r)\,\big(\mathrm{rot}_r K\big)\,\rho_{\mathrm{in}}(r)^{-1}
    \;=\; S^r \, \big(\mathrm{rot}_r K\big)\, S^{-r}.
\end{equation}
The orthogonal projector onto this subspace is the (finite) Haar average of the action:
\begin{equation}
    \label{eq:c4_group_average}
    P(K) \;=\; \tfrac14 \sum_{r=0}^{3} S^{r}\, \big(\mathrm{rot}_{r} K\big)\, S^{-r}.
\end{equation}
Index-wise, for any $(p,q,\alpha,\beta,i,j)$, this reads
\begin{equation}
    \label{eq:c4_indexwise}
    \big[P(K)\big]_{p,\alpha;\,q,\beta}[i,j]
    \;=\; \tfrac14 \sum_{r=0}^{3}
    \big[\mathrm{rot}_{r}K\big]_{p,\alpha-r;\,q,\beta-r}[i,j].
\end{equation}
Since~\eqref{eq:c4_group_average} is the average of unitary (permutation + rotation) actions,
$P$ is an orthogonal projector: $P^2=P$ and $P^\top=P$. In practice,
\eqref{eq:c4_group_average} yields an efficient, exact implementation requiring only four
$90^\circ$ rotations and two inexpensive orientation-channel permutations per term.

\subsection{Biomedical Image Classification}

\subsubsection{Hyperparameters}

All baselines (\emph{3D} CNNs, RPPs, and (P)-SCNNs) use the same hyperparameters as in \citet{veefkind24}; see Figure~9 and Tables~12 and~14 in \citet{veefkind24}. In our method, we keep most hyperparameters fixed. We only sweep over the newly introduced parameters \((\lambda_G, \lambda_\perp)\) and tune the learning rate. Table~\ref{tab:hyperparams_medmnist} lists the search ranges; we select the best configuration based on validation metrics. Table~\ref{tab:chosen_hyperparams_medmnist} reports the chosen values.

\begin{table}[h]
    \centering
    \caption{Hyperparameter tuning range for our approach on the MedMNIST datasets.}
    \begin{tabular}{|c|c|c|}
        \hline
        Learning rate & $\lambda_G$ & $\lambda_\perp$ \\
        \hline
        $5 \cdot 10^{-6}, 1 \cdot 10^{-5}, 5 \cdot 10^{-5}, 1 \cdot 10^{-5}, 5 \cdot 10^{-4}$ & $10^{-3}, 10^{-4}$ & $10^{0}, 10^{-1}, 10^{-2}, 10^{-3}$ \\
        \hline
    \end{tabular}
    \label{tab:hyperparams_medmnist}
\end{table}

\begin{table}[t]
    \centering
    \scriptsize
    \setlength{\tabcolsep}{3.2pt}
    \renewcommand{\arraystretch}{1.05}
    \caption{Chosen hyperparameters for each configuration on the MedMNIST datasets.}
    \label{tab:chosen_hyperparams_medmnist}
    \begin{tabular}{@{}l|ccc|ccc@{}}
        \toprule
        \textbf{Parameter} &
        \multicolumn{3}{c|}{$SO(3)$} &
        \multicolumn{3}{c}{$O(3)$} \\
        & \textbf{Nodule} & \textbf{Organ} & \textbf{Synapse}
        & \textbf{Nodule} & \textbf{Organ} & \textbf{Synapse} \\
        \midrule
        Learning rate       & $5 \cdot 10^{-5}$ & $5 \cdot 10^{-5}$ & $10^{-4}$ & $10^{-4}$ & $10^{-4}$ & $10^{-4}$ \\
        $\lambda_{G}$       & $10^{-3}$ & $10^{-3}$ & $10^{-3}$ & $10^{-4}$ & $10^{-3}$ & $10^{-3}$ \\
        $\lambda_{\perp}$   & $1$ & $10^{-2}$ & $1$ & $10^{-1}$ & $10^{-3}$ & $10^{-1}$ \\
        \bottomrule
    \end{tabular}
\end{table}

\subsubsection{Sensitivity with respect to $\lambda_G$ and $\lambda_\perp$}\label{app:sensitivity_lambdas_(s)o3}

We evaluate the sensitivity of our method with respect to the parameters $\lambda_G$ and $\lambda_\perp$. Figure~\ref{fig:sweep_lambdas_medmnist} shows performance on the \texttt{Nodule}, \texttt{Organ}, and \texttt{Synapse} tasks of the MedMNIST dataset \citep{medmnistv2} for $\lambda_G, \lambda_\perp \in \{0.0001, 0.001, 0.01, 0.1, 1.0\}$.

We observe that the optimal parameter choice depends on the degree of equivariance present in the task. \texttt{Nodule} and \texttt{Synapse} benefit from larger values of $\lambda_\perp$, which encourage stronger equivariance, whereas \texttt{Organ} achieves the best performance for smaller $\lambda_\perp$. This is expected because \texttt{Organ} requires distinguishing, for example, left from right kidneys or femurs. This is information that is suppressed under perfect rotational or reflection symmetry.

\begin{figure}
    \centering
    \includegraphics[width=\linewidth]{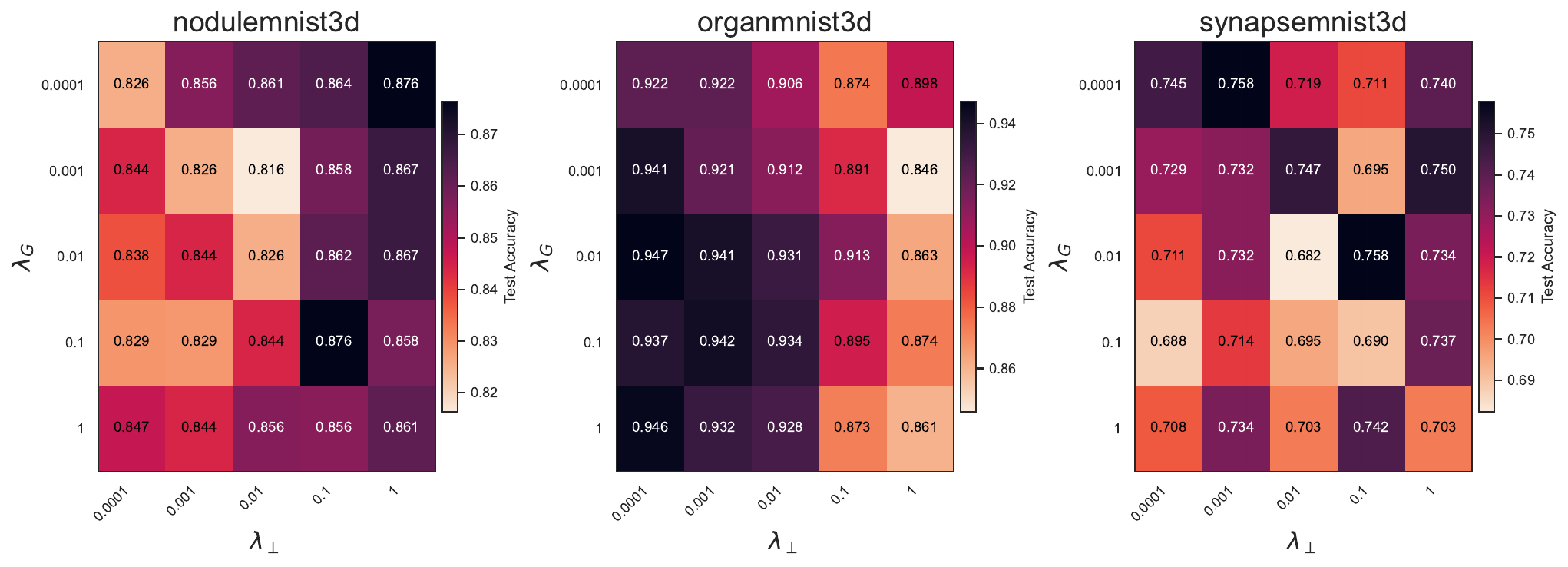}
    \caption{Classification accuracy of our approach on the \texttt{Nodule}, \texttt{Organ}, and \texttt{Synapse} tasks of MedMNIST \citep{medmnistv2} while varying $\lambda_G$ and $\lambda_\perp$.}
    \label{fig:sweep_lambdas_medmnist}
\end{figure}

\subsubsection{Confusion matrices}\label{app:confusion}

\begin{figure}[ht]
  \centering
  \begin{subfigure}[b]{0.32\textwidth}
    \centering
    \includegraphics[width=\linewidth]{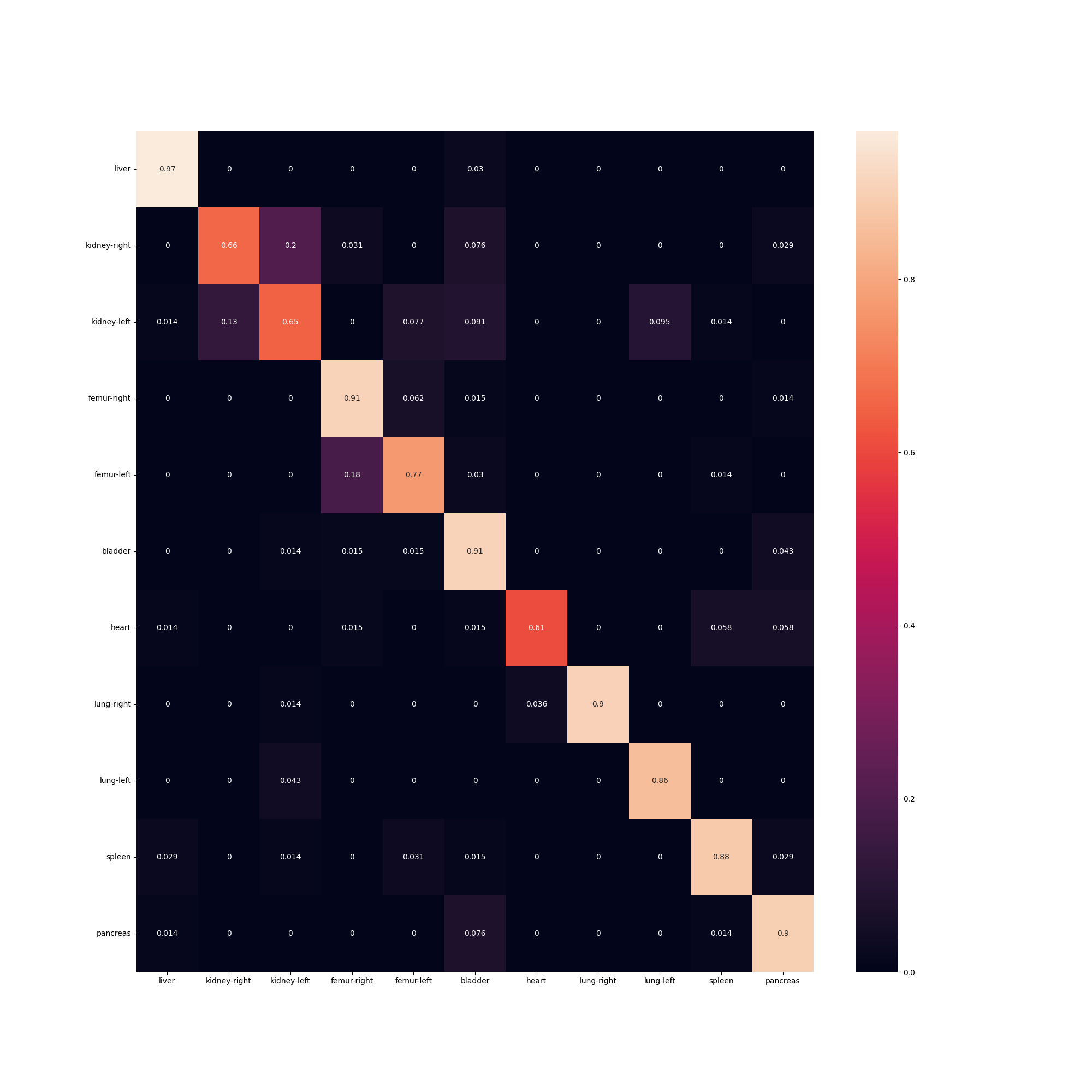}
    \caption{CNN}
    \label{fig:one}
  \end{subfigure}\hfill
  \begin{subfigure}[b]{0.32\textwidth}
    \centering
    \includegraphics[width=\linewidth]{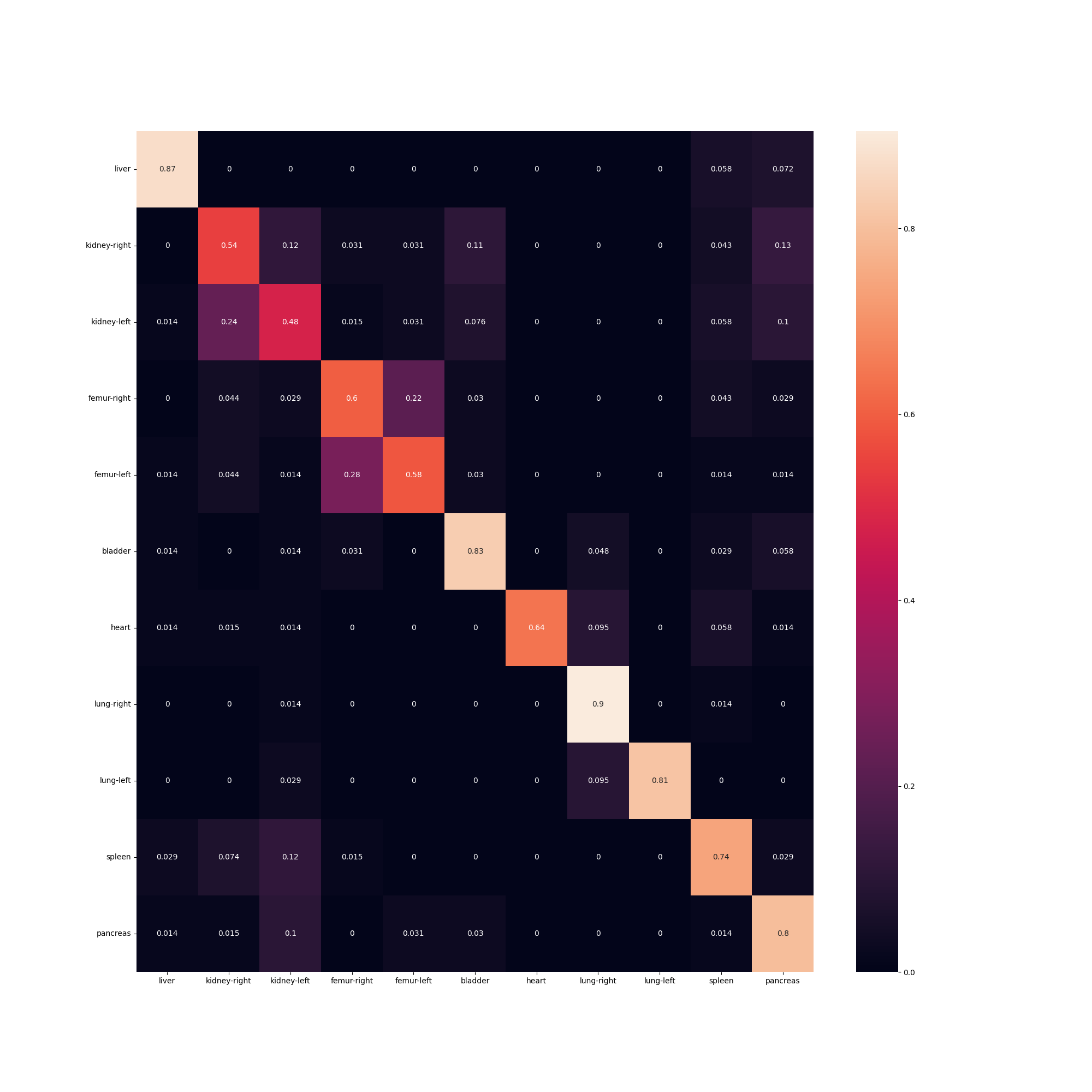}
    \caption{SCNN}
    \label{fig:two}
  \end{subfigure}\hfill
  \begin{subfigure}[b]{0.32\textwidth}
    \centering
    \includegraphics[width=\linewidth]{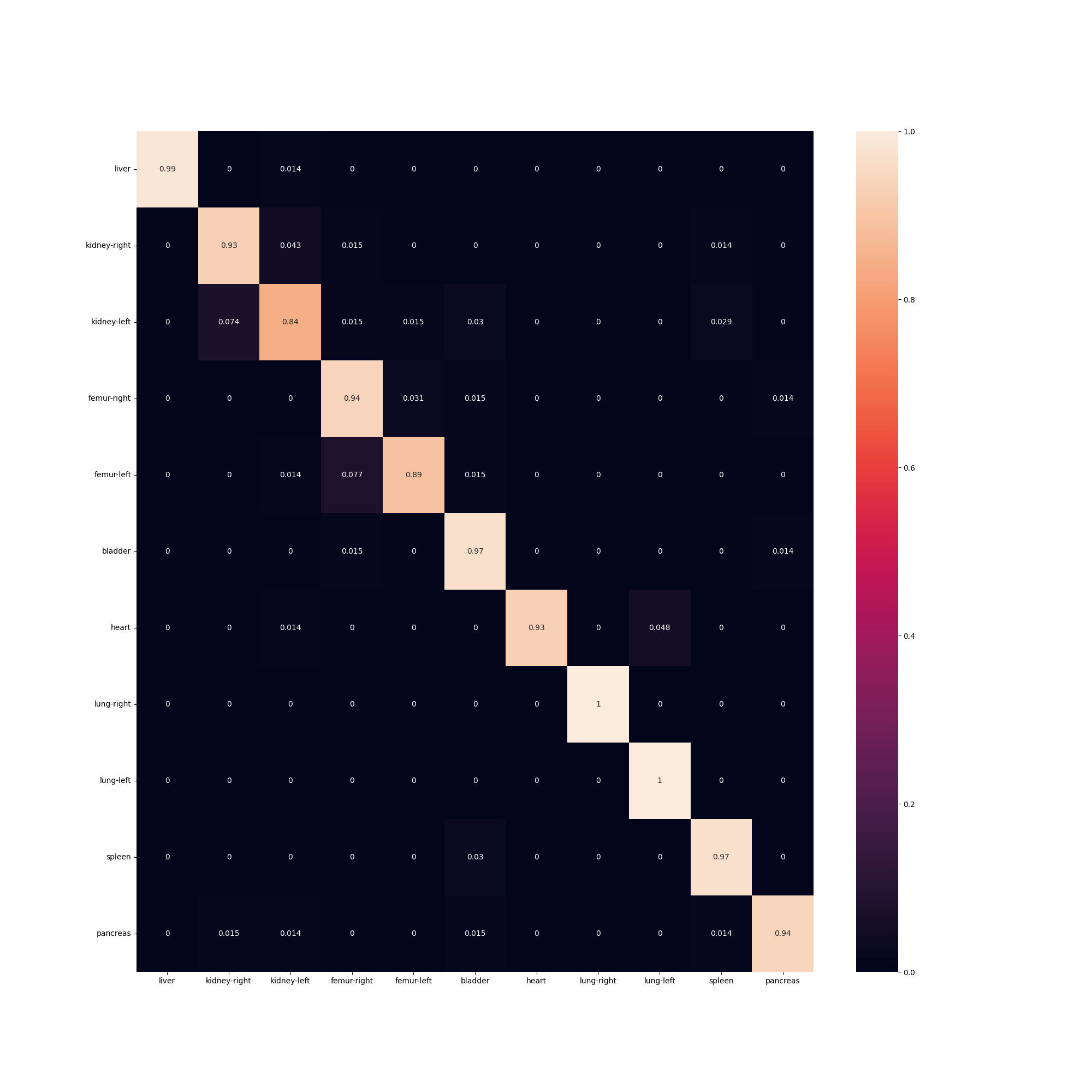}
    \caption{Ours}
    \label{fig:three}
  \end{subfigure}
  \caption{Confusion matrices for a 3D CNN, an SO(3)-equivariant SCNN, and our approach trained on the \texttt{Organ} split of the MedMNIST dataset.}
  \label{fig:confusion_medmnist}
\end{figure}

We examine the confusion matrices (Figure \ref{fig:confusion_medmnist}) for a 3D CNN, an SO(3)-equivariant SCNN, and our approach, all trained on the \texttt{Organ} split of the MedMNIST dataset \citep{medmnistv2}. The SCNN hard-codes rotational equivariance and therefore tends to confuse left and right femurs and kidneys. The 3D CNN exhibits similar confusion for the kidneys, but distinguishes the femurs more reliably. Our method performs best overall, with substantially fewer left–right confusions.





\section{Sensitivity with respect to norm}

\begin{table}[t]
    \centering
    \caption{Results for the CT scan metal artifact reduction task from Section~\ref{sec:experiments_discrete_rotation_equivariance} for different matrix norms. We report throughput during training and inference as well as total epoch time; the performance metrics are PSNR/SSIM. We consider the spectral, Frobenius (which we use by default in Section~\ref{sec:experiments_discrete_rotation_equivariance}) and infinity norms, as well as the $(p,q)$-norms for $p,q\in\{1, 2, 3\}$.}
    \label{tab:norm_comparison}
    \begin{tabular}{l|ccccc}
        \toprule
        Norm &
        \multicolumn{2}{c}{Throughput (no./GPU-s)} &
        Epoch&
        \multicolumn{2}{c}{AAPM} \\
        \cmidrule(lr){2-3}\cmidrule(lr){4-4}\cmidrule(lr){5-6}
        &
        Train $\uparrow$ &
        Inference $\uparrow$ &
        time (s) $\downarrow$ &
        PSNR $\uparrow$ &
        SSIM $\uparrow$ \\
        \midrule
        Spectral     & 6.59 &   10.16     &    877  &     39.25   & 0.9318 \\
        \textbf{Frobenius}& 7.22 & 10.11  &    778  &     38.48   & $\mathbf{0.9457}$ \\  
        Infinity     & 7.73 &   10.12     &    777  &     35.61   & 0.9153 \\
        \midrule
        $(1,1)$      & 7.63 &   10.13     &    785  &     35.57   & 0.8864 \\
        $(1,2)$      & 7.12 &   10.14     &    785  &     38.05   & 0.9365 \\
        $(1,3)$      & 7.12 &   10.13     &    785  &     38.67   & 0.9391 \\
        \midrule
        $(2,1)$      & 7.65 &   10.14     &    783  &     39.33   & 0.9299 \\
        $(2,2)$      & 7.65 &   10.13     &    783  &     38.24   & 0.9430 \\
        $(2,3)$      & 7.61 &   10.14     &    786  &     38.18   & 0.9299 \\
        \midrule
        $(3,1)$      & 7.59 &   10.14     &    787  &     39.10   & 0.9304 \\
        $(3,2)$      & 7.32 &   10.10     &    810  &     39.54   & 0.9322 \\
        $(3,3)$      & 7.18 &   10.16     &    780  &     37.86   & 0.9346 \\
        \bottomrule
    \end{tabular}
\end{table}

In this ablation, we study the impact of the choice of matrix norm in the projection regulariser. We consider the following norms. First, the spectral norm
\begin{equation}
    \|A\|_2 = \max_{\|x\|_2 = 1} \|A x \|_2,
\end{equation}
which is equal to the largest singular value of $A$. Second, the Frobenius norm
\begin{equation}
    \|A\|_F = \sqrt{\sum_{i,j} a_{i,j}^2}.
\end{equation}
Third, the (entrywise) infinity norm
\begin{equation}
    \|A\|_\infty = \max_{i,j} |a_{i,j}|.
\end{equation}
Finally, we consider mixed $(p,q)$-norms, defined row-wise as
\begin{equation}
    \|A\|_{p,q} = \left( \sum_i \left( \sum_j |a_{i,j}|^p \right)^{\frac{q}{p}} \right)^{\frac{1}{q}},
\end{equation}
for $p,q \in \{1,2,3\}$. The corresponding results are reported in Table~\ref{tab:norm_comparison}. We can see that the choice of norm has only a modest effect on both computational cost and reconstruction quality. Training and inference throughput, as well as epoch time, are nearly identical across all norms, except for the spectral norm, which is about $10$--$15\%$ slower per epoch, as expected given the need to estimate the largest singular value. In terms of image quality, several choices yield very similar PSNR/SSIM, with the Frobenius and $(p,q)$-norms for $(p,q)\in{(2,2),(1,3),(3,3)}$ all lying within roughly $1$ dB PSNR and $0.01$ SSIM of each other. Norms that emphasise elementwise extremal behaviour, such as the infinity norm and the $(1,1)$-norm, lead to clear degradation in both PSNR and SSIM, indicating that these penalties are too stiff and effectively underfit the reconstruction task.
Since the spectral norm brings no systematic performance gains while incurring a noticeable runtime overhead, and more aggressive entrywise norms harm reconstruction quality, we adopt the Frobenius norm as our default in Section~\ref{sec:experiments_discrete_rotation_equivariance}.

\section{Declaration of LLM use}

We used LLMs to aid in the writing process for proof-reading, spell checking, and polishing writing.


\end{document}